\documentclass[11pt]{article}
\usepackage[utf8]{inputenc}
\usepackage[top=1in, left=1in, right=1in, bottom=1in]{geometry}

\usepackage[utf8]{inputenc} 
\usepackage[T1]{fontenc}    
\usepackage{booktabs}       
\usepackage{amsfonts}       
\usepackage{nicefrac}       
\usepackage{microtype}  
\usepackage{xcolor}         

\usepackage{etoc}
\usepackage[colorlinks,linkcolor=red,filecolor=blue,citecolor=blue,urlcolor=blue]{hyperref}
\usepackage{booktabs}       
\usepackage{amsfonts}       
\usepackage{amsmath}
\usepackage{graphicx}
\usepackage{amsthm}
\usepackage{amssymb}
\usepackage{multirow}
\usepackage{boldline}
\usepackage{makecell}
\usepackage[round]{natbib}
\usepackage{amssymb}%
\usepackage{mathtools}
\usepackage{stmaryrd}
\usepackage[T1]{fontenc}
\usepackage{xargs}
\usepackage{xcolor}
\usepackage{wrapfig}
\usepackage{bm,bbm}

\usepackage[utf8]{inputenc} 
\usepackage[T1]{fontenc}    
\usepackage{hyperref}       
\usepackage{url}            
\usepackage{booktabs}       
\usepackage{amsfonts}       
\usepackage{nicefrac}       
\usepackage{microtype}      
\usepackage{xcolor}  

\usepackage{times}  
\usepackage{helvet}  
\usepackage{courier}  
\usepackage{graphicx} 
\urlstyle{rm} 
\usepackage{natbib}  
\usepackage{caption} 
\usepackage{algorithm}
\usepackage{algorithmic}
\usepackage{wrapfig}

\usepackage{epsfig}

\usepackage{amsmath}
\usepackage{amssymb}
\usepackage{bm,amsmath,amsthm,amssymb,enumitem}
\usepackage{subfig}
\usepackage{xargs}
\usepackage{stmaryrd}
\usepackage{mdframed}
\usepackage{booktabs}

\newmdtheoremenv{theo}{Theorem}
\newmdtheoremenv{lemm}{Lemma}
\newmdtheoremenv{coro}{Corollary}

\newtheorem{assumption}{H\!\!}

\newtheorem*{Lemma*}{Lemma}
\newtheorem*{Theorem*}{Theorem}
\newtheorem*{Corollary*}{Corollary}

\newcommand{\eqsp}{\;}
\newcommand{\beq}{\begin{equation}}
\newcommand{\eeq}{\end{equation}}
\newcommand{\eqdef}{\mathrel{\mathop:}=}

\def\EE{\mathbb{E}}
\def\ie{\textit{i.e.,}}
\newcommand{\prop}[1]{\textsc{t}_{#1}}

\def\rset{\ensuremath{\mathbb{R}}}
\def\zset{\ensuremath{\mathcal{Z}}}
\def\xset{\ensuremath{\mathcal{X}}}
\def\bset{\ensuremath{\mathcal{B}}}
\def\accept{\ensuremath{\mathcal{A}_\theta}}
\def\compaccept{\ensuremath{\mathcal{A}^*_\theta}}
\def\thresh{\ensuremath{\textsf{th}}}

\def\algo{\textsc{STANLEY}}

\def\stepsize{\mathsf{\gamma}}

\usepackage{newfloat}
\usepackage{listings}
\lstset{%
	basicstyle={\footnotesize\ttfamily},
	numbers=left,numberstyle=\footnotesize,xleftmargin=2em,
	aboveskip=0pt,belowskip=0pt,%
	showstringspaces=false,tabsize=2,breaklines=true}
\floatstyle{ruled}
\newfloat{listing}{tb}{lst}{}
\floatname{listing}{Listing}

\begin{document}

\title{\bf STANLEY: Stochastic Gradient Anisotropic\\ Langevin Dynamics for Learning Energy-Based Models}

\author{\vspace{0.3in}\\\\
\textbf{Belhal Karimi}, \ \textbf{Jianwen Xie}, \  \textbf{Ping Li} \\\\
Cognitive Computing Lab\\
Baidu Research\\
10900 NE 8th St. Bellevue, WA 98004, USA\\\\
  \texttt{\{belhal.karimi,jianwen.kenny,pingli98\}@gmail.com}
}

\date{\vspace{0.5in}}
\maketitle

\begin{abstract}
 \noindent We propose in this paper, \textbf{\algo}, a \textbf{ST}ochastic gradient \textbf{AN}isotropic \textbf{L}ang\textbf{E}vin d\textbf{Y}namics, for sampling high dimensional data.
With the growing efficacy and potential of Energy-Based modeling, also known as non-normalized probabilistic modeling, for modeling a generative process of different natures of high dimensional data observations, we present an end-to-end learning algorithm for Energy-Based models (EBM) with the purpose of improving the quality of the resulting sampled data points.
While the unknown normalizing constant of EBMs makes the training procedure intractable, resorting to Markov Chain Monte Carlo (MCMC) is in general a viable option.
Realizing what MCMC entails for the EBM training, we propose in this paper, a novel high dimensional sampling method, based on an anisotropic stepsize and a gradient-informed covariance matrix, embedded into a discretized Langevin diffusion.
We motivate the necessity for an anisotropic update of the negative samples in the Markov Chain by the nonlinearity of the backbone of the EBM, here a Convolutional Neural Network.
Our resulting method, namely \algo, is an optimization algorithm for training Energy-Based models via our newly introduced MCMC method.
We provide a theoretical understanding of our sampling scheme by proving that the sampler leads to a geometrically uniformly ergodic Markov Chain.
Several image generation experiments are provided in our paper to show the effectiveness of our method.\\

\end{abstract}

\newpage

\section{Introduction}
The modeling of a data generating process is critical for many modern learning tasks.
A growing interest in generative models within the realm of computer vision has led to multiple interesting solutions.
In particular, Energy-based models (EBM)~\citep{zhu1998filters,lecun2006tutorial}, are a class of generative models that learns high dimensional and complex (in terms of landscape) representation/distribution of the input data.
EBMs have been used in several applications including computer vision~\citep{ngiam2011learning, xie2016theory,du2019implicit}, natural language processing~\citep{mikolov2013distributed,deng2020residual},  density estimation~\citep{wenliang2019learning,song2020sliced} and reinforcement learning~\citep{haarnoja2017reinforcement}. Formally, EBMs are built upon an unnormalized log probability, called the energy function, that is not required to sum to one as standard log probability functions.
This noticeable feature allows for more freedom in the way one parameterizes the EBM.
For instance, Convolutional Neural Network can be employed to parameterize this function, see~\citet{xie2016theory}.

The training procedure of such models consists of finding an energy function that assigns to lower energies to observations than unobserved points.
This phase can be cast as an optimization task and several ways are possible to solve it.
In this paper, we will focus on training the EBM via Maximum Likelihood Estimation (MLE). Particularly, while using MLE to fit the EBM on observed data, the high non-convexity of the loss function leads to a non closed form maximization step. 
In general, gradient based optimization methods are thus used during that phase.
Besides, given the intractability of the normalizing constant of our model, the aforementioned gradient, which is an intractable integral, needs to be approximated.
A popular and efficient way to conduct such approximation is to use Monte Carlo approximation where the samples are obtained via Markov Chain Monte Carlo (MCMC)~\citep{meyn2012markov}. The goal of this embedded MCMC procedure while training the EBM is to synthesize new examples of the input data and use them to approximate quantities~of~interest.

Hence, the sampling phase is crucial for both the EBM training speed and its final accuracy in generating new synthetic samples.
The computational burden of those MCMC transitions, at each iteration of the EBM training procedure, is alleviated via different techniques in the literature.
For instance, in~\citet{nijkamp2019learning}, the authors develop a short-run MCMC as a flow-based generator mechanism despite its non convergence property. Other principled approach, as in~\citet{hinton2002training}, keeps in memory the final chain state under the previous global model parameter and uses it as the initialization of the current chain.
The heuristic of such approach is that along the EBM iterations, the conditional distributions, depending on the model parameter, are more and more similar and thus using a good sample from the previous chain is in general a good sample of the current one.
Though, this method can be limited during the first iterations of the EBM training since when the model parameter changes drastically, the conditional distributions do change too, and samples from two different chains can be quite inconsistent.
Several extensions modifying the way the chain is initialized can be found in~\citet{welling2002new,gao2018learning,du2019implicit}.

An interesting line of work in the realm of MCMC-based EBM tackles the biases induced by stopping the MCMC runs too early. 
Indeed, it is known, see~\citet{meyn2012markov}, that before convergence, MCMC samples are biased and thus correcting this bias while keeping a short and less expensive run is an appealing option.
Several contributions aiming at removing this bias for improved MCMC training include coupling MCMC chains, see~\citet{qiu2020unbiased,jacob2020unbiased} or simply estimating this bias and correct the chain afterwards, see~\citet{du2021improved}.

Here, our work is in line with the context of -- high-dimensional data, -- EBM parameterized by deep neural networks and -- MLE-based optimization via MCMC, which make our method particularly attractive to all of the above combined. 
We also consider the case of a short-run MCMC for the training of an EBM.
Rather than focusing on debiasing the chain, we develop a new sampling scheme where the goal is to obtain better samples from the target distribution in fewer MCMC transitions.
We consider that the shape of the target distribution, which inspires our proposed method, is of utmost importance to obtain such negative samples. Our \textbf{contributions} are summarized~below:
\begin{enumerate}
\item We develop \algo, an Energy-Based model training method that embeds a newly proposed \emph{convergent} and \emph{efficient} MCMC sampling scheme, focusing on curvature informed metrics of the target distribution one wants to sample from. 
\item Based on an anisotropic stepsize, our method, which is an improvement of the Langevin Dynamics, achieves to obtain negative samples from the Energy-Based model data distribution and improves the overall optimization algorithm. 
\item We prove the geometric ergodicity uniformly on any compact set of our MCMC method assuming some regularity conditions on the target distribution and on the backbone model of the EBM. 
\item  We empirically assess the effectiveness of our method on several image generation tasks, both on synthetic and real datasets including the Oxford Flowers 102 dataset, CIFAR-10 and CelebA. We conclude the work with an Image inpainting experiment on a benchmark~dataset.
\end{enumerate}

\noindent\textbf{Roadmap}. \ Section~\ref{sec:mcmc} introduces important notations and related work.
Section~\ref{sec:main} develops the main algorithmic contribution of this paper, namely \algo.
Section~\ref{sec:theory} presents our main theoretical results focusing on the ergodicity of the proposed MCMC sampling method.
Section~\ref{sec:numericals} presents several image generation experiments on both synthetic and real datasets.
The complete proofs of our theoretical results can be found in the supplementary material.

\section{On MCMC-based Energy Based Models}\label{sec:mcmc}

Given a stream of input data noted $x \in \xset \subset \rset^p$, the EBM is a Gibbs distribution defined as follows:
\beq\label{eq:ebm}
p_{\theta}(x) = \frac{1}{Z(\theta)} \mathrm{exp}(f_{\theta}(x)) \eqsp,
\eeq
where $\theta \in \Theta \subset \rset^d$ denotes the global parameters vector of our model and $Z(\theta) \eqdef \int_{x} \mathrm{exp}(f_{\theta}(x)) \textrm{d}x$ is the normalizing constant (with respect to $x$).
In particular, $f_{\theta}(x): \xset \to \rset$, is the energy function (up to a sign) that can be parameterized by a Convolutional Neural Network for instance.
The natural way of fitting model~\eqref{eq:ebm} is to employ Maximum Likelihood Estimation (MLE) maximizing the marginal likelihood $p(\theta)$, \ie\ finding the vector $\theta^*$ such that for any $x \in \xset$, 
\beq\label{eq:mle}
 \theta^*  = \arg \max \limits_{\theta \in \Theta} \mathsf{L}(\theta) = \arg \max \limits_{\theta \in \Theta} \EE_{q(x)}[\log p_{\theta}(x)]\eqsp,
 \eeq
where $q(x)$ denotes the true distribution of the input data $x$. The optimization task~\eqref{eq:mle} is not tractable in closed form and requires an iterative procedure in order to be solved.
The standard algorithm used to train EBMs is Stochastic Gradient Descent (SGD), see~\citet{robbins1951stochastic,bottou2007large}.
SGD requires having access to the gradient of the objective function $\log p(\theta)$ which requires computing an intractable integral, due to the high nonlinearity of the generally utilized parameterized model $f_\theta(x)$.
Given the general form defined in~\eqref{eq:ebm}, we have that:
\beq\notag
\begin{split}
\nabla \mathsf{L}(\theta)
 =   \EE_{q(x)}[\nabla_\theta f_\theta(x)] - \EE_{p_{\theta}(x)}[\nabla_\theta f_\theta(x)]\eqsp,
\end{split}
\eeq
and a simple Monte Carlo approximation of $\nabla \log p(\theta)$ yields the following expression of the gradient
\beq\label{eq:mcapprox}
\nabla \mathsf{L}(\theta) \approx  \frac{1}{n} \sum_{i=1}^n \nabla_\theta f_\theta(x^{q}_i) - \frac{1}{M} \sum_{m=1}^M \nabla_\theta f_\theta(z_m) \eqsp,
\eeq
where $\{z_m\}_{m=1}^M$ are samples obtained from the EBM $p_{\theta}(x)$ and $\{x^{q}_i\}_{i=1}^n$ are drawn uniformly from the true data distribution $q(x)$.
While drawing samples from the data distribution is trivial, the challenge during the EBM training phase is to obtain samples from the EBM distribution $p_{\theta}(x)$ for any model parameter $\theta \in \Theta$.
This task is generally performed using MCMC methods.
State-of-the-art MCMC used in the EBM literature include Langevin Dynamics, see~\citet{grenander1994representations,roberts1998optimal,roberts1996exponential} and Hamiltonian Monte Carlo (HMC), see~\citet{neal2011mcmc}.
Those methods are detailed in the sequel and are important concepts of our contribution.

\vspace{0.2in}
\noindent\textbf{Energy Based Models.}
Energy based models are a class of generative models that leverage the power of Gibbs potential and high dimensional sampling techniques to produce high quality synthetic image samples. EBMs are powerful tools for generative modeling tasks, as a building block for a wide variety of tasks. 
The main purpose of EBMs is to learn an energy function~\eqref{eq:ebm} that assigns low energy to a stream of observation and high energy values to other inputs.
Learning of such models is done via MLE~\citep{xie2016theory,du2019implicit} or Score Matching~\citep{hyvarinen2005estimation} or Noise Constrastive Estimation~\citep{gao2020flow}.
In several general applications, authors leverage the power of EBMs for  develop an energy-based optimal policy where the parameters of that energy function are provided by the reward of the overall system.
Learning EBMs with alternative strategies include contrastive divergence (CD)~\citep{hinton2002training,tieleman2008training}, noise contrastive estimation (NCE)~\citep{gutmann2010noise,gao2020flow}, introspective neural networks (INN)~\citep{lazarow2017introspective,jin2017introspective,lee2018wasserstein}, cooperative networks (CoopNets)~\citep{xie2018cooperative,xie2020cooperative, xie2021learning,xie2022cooperative,xie2022tale,zhao2023coopinit}, f-divergence~\citep{yu2020training}, and triangle divergence~\citep{han2019divergence,han2020joint}.
Recently, EBMs parameterized by modern neural networks have drawn much attention from the computer vision and machine learning communities. 
Successful applications with EBMs include image generation~\citep{xie2016theory,gao2018learning,du2019implicit,zhao2021learning,zheng2021patchwise}, videos~\citep{XieCVPR17,xie2021learning_pami}, 3D volumetric shapes~\citep{xie2018learning,xie2022generative}, 
unordered point clouds~\citep{xie2021generative}, 
texts~\citep{deng2020residual}, molecules~\citep{ingraham2019learning,du2020energy}, as well as image-to-image translation~\citep{xie2022cooperative,xie2021cycleCoopNets}, unpaired cross-domain image translation~\citep{xie2021cycleCoopNets,song2023progressive}, 
out-of-distribution detection~\citep{liu2020energy}, inverse optimal control~\citep{xu2022energy}, deep regression~\citep{gustafsson2020energy}, salient object detection~\citep{zhang2022energy} and latent space modeling~\citep{pang2020learning,zhang2021learning,jing2023ebmprior}.
Yet, unlike VAE~\citep{kingma2014auto} or GAN~\citep{goodfellow2014generative} EBMs enjoy from a single structure requiring training (versus several networks) resulting in more stability.
The use of implicit sampling techniques, such as MCMC, as detailed in the sequel, allows more flexibility by trading off sample quality for computation time.
Overall, the \emph{implicit} property of the EBM, seen as an energy function, makes it a tool of choice as opposed to \emph{explicit} generators that are limited to some design choices, such as the choice of the prior distribution for VAEs or both neural networks design in GANs.

\vspace{0.2in}
\noindent\textbf{MCMC procedures.}
Whether for sampling from a posterior distribution~\citep{robert2010introducing, han2017alternating, xie2019learning, zhang2020learning, an2021learning,zhu2023likelihood, xie2023tale}, or in general intractable likelihoods scenario~\citep{doucet2000sequential}, various inference methods are available.
Approximate inference is a partial solution to the inference problem and include techniques such as Variational Inference (VI)~\citep{wainwright2008graphical,freitas2001variational} or Laplace Approximation~\citep{wolfinger1993laplace,rue2009approximate}. 
Those methods allow the simplification of the intractable quantities and result in the collection of good, yet approximate, samples.
As seen in~\eqref{eq:mcapprox}, training an EBM requires obtaining samples from the model itself.
Given the nonconvexity of the structural model $f_\theta(\cdot)$ with respect to the model parameter $\theta$, direct sampling is not an option.
Besides, in order to update the model parameter $\theta$, usually through gradient descent type of methods~\citep{bottou2007large},  exact samples from the EBM are needed in order to compute a good approximation of its (intractable) gradient, see~\eqref{eq:mcapprox}.
To do so, we generally have recourse to MCMC methods.
MCMC are a class of inference algorithms that provide a principled iterative approach to obtain samples from any intractable distribution.
While being exact, the samples generally represent a larger computation burden than methods such as VI.
Increasing the efficiency of MCMC methods, by obtaining exact samples, in other words constructing a chain that converges faster, in fewer transitions is thus of utmost importance in the context of optimizing EBMs.
Several attempts have been proposed for the standalone task of posterior sampling through the use of Langevin diffusion, see the Unadjusted Langevin in~\citet{brosse2019tamed}, the MALA algorithm in~\citet{roberts1998optimal,roberts1996exponential,durmus2017fast} or leveraging Hamiltonian Dynamics as in~\citet{girolami2011riemann}.
We propose in the next section, an improvement of the Langevin diffusion with the ultimate goal of speeding the EBM training procedure.
Our method includes this latter improvement in an end-to-end learning algorithms for Energy-Based models.

\section{Gradient Informed Langevin Diffusion}\label{sec:main}

We now introduce the main algorithmic contribution of our paper, namely \algo.
\algo\ is a learning algorithm for EBMs, comprised of a novel MCMC method for sampling samples from the intractable model~\eqref{eq:ebm}.
We provide theoretical guarantees of our scheme in Section~\ref{sec:theory}.

\subsection{Preliminaries on Langevin MCMC based EBM}
State-of-the-art MCMC sampling algorithm, particularly used during the training procedure of EBMs, is the discretized Langevin diffusion, cast as Stochastic Gradient Langevin Dynamics (SGLD), see~\citet{welling2011bayesian}.
In particular, several applications using EBM and SGLD have thrived in image generation, natural language processing or even biology~\citep{du2020energy}.
Yet, the choice of the proposal, generally Gaussian, is critical for improving the performances of both the sampling step (inner loop of the whole procedure) and the EBM training.
We recall the vanilla discretized Langevin diffusion used in the related literature as follows:
\beq\notag
z_k = z_{k-1} + \frac{\gamma}{2} \nabla \log \pi_\theta(z_k) + \sqrt{\gamma} B_k \eqsp,
\eeq
where $\pi_\theta(\cdot):= p(\cdot,\theta)$ is the target potential one needs samples from and defined in \eqref{eq:ebm}, $z_k$ represents the states of the chains at iteration $k$, \ie\ the generated samples in the context of EBM, $k$ is the MCMC iteration index and $B_k$ is the Brownian motion, usually set as a Gaussian noise and which can be written as $B_k \eqdef \epsilon \, \xi_k$ where $\xi_k$ is a standard Gaussian random variable and $\epsilon$ is a scaling factor for implementation purposes.
This method directs the proposed moves towards areas of high probability of the stationary distribution $\pi_\theta$, for any $\theta \in \Theta$ , using the gradient of $\log \pi_{\theta}$ and has been the object of several studies~\citep{girolami2011riemann,cotter2013mcmc}.
In high dimensional and highly nonlinear settings, the burden of computing this gradient for a certain number of MCMC transitions leads to a natural focus: improving of the sampling scheme by assimilating information about the landscape of the target distribution while keeping its ease of implementation.

\subsection{\algo, an Anisotropic Energy Based Modeling Approach}

Given the drawbacks of current MCMC methods used for training EBMs, we introduce a new sampler based on the Langevin updates presented above in Step~\ref{line:langevin} of Algorithm~\ref{alg:anila}.

\begin{algorithm}[t]
\caption{\algo\ for Energy-Based model} \label{alg:anila}
\begin{algorithmic}[1]
\STATE \textbf{Input}: Number of iterations $T$, number of MCMC steps $K$ and of samples $M$, learning rates $\{\eta_t\}_{t >0}$, stepsize threshold $\thresh$, initial value $\theta_0$, MCMC initialization $\{ z_{0}^m \}_{m=1}^M$ and data~$\{ x_{i} \}_{i=1}^n$.
\FOR{$t=1$ to $T$}
\STATE Compute the anisotropic stepsize as follows: \label{line:step}
\beq\label{eq:step}
\stepsize_t = \thresh /\max(\thresh, | \nabla f_{\theta_t}(z_{t-1}^m) |) \eqsp.
\eeq
\STATE Draw $M$ samples $\{ z_{t}^m \}_{m=1}^M$ from the objective potential~\eqref{eq:ebm} via Langevin diffusion:\label{line:langevin}
\FOR{$k=1$ to $K$}
\STATE Construct the Markov Chain with the Brownian motion $\mathsf{B}_k$ as follows:
\beq\label{eq:anila}
z_{k}^{m} = z_{k-1}^m + \stepsize_t  \nabla f_{\theta_t}(z_{k-1}^m) /2 + \sqrt{\stepsize_t} \mathsf{B}_k \eqsp,
\eeq
\ENDFOR
\STATE Assign $\{ z_{t}^m \}_{m=1}^M \leftarrow \{ z_{K}^m \}_{m=1}^M$.
\STATE Sample $m$ positive observations $\{ x_{i} \}_{i=1}^m$ from the empirical data distribution.
\STATE Compute the gradient of the empirical log-EBM~\eqref{eq:ebm}:
\beq\notag
\begin{split}
\nabla \mathsf{L}(\theta_t)
 =& \mathbb{E}_{p_{\text {data }}}\left[\nabla_{\theta} f_{\theta_t}(x)\right]-\mathbb{E}_{p_{\theta}(x)}\left[\nabla_{\theta_t} f_{\theta}(z_t)\right] \approx  \frac{1}{n} \sum_{i=1}^{n} \nabla_{\theta} f_{\theta_t}\left(x_{i}\right)-\frac{1}{M} \sum_{m=1}^{M} \nabla_{\theta} f_{\theta_t}\left(z_K^m\right)\eqsp.
\end{split}
\eeq
\STATE Update the vector of global parameters of the EBM:\label{line:gradient}
$\theta_{t+1} = \theta_{t} + \eta_t \nabla \mathsf{L}(\theta_t)$ 
\ENDFOR
\STATE \textbf{Output:} Vector of fitted parameters $\theta_{T+1}$.
\end{algorithmic}
\end{algorithm}

\vspace{0.1in}\noindent
\textbf{Intuitions behind the efficacy of \algo:}
Some past modifications have been proposed in particular to optimize the covariance matrix of the proposal of the general MCMC procedure in order to better stride the support of the target distribution. 
Langevin Dynamics is one example of those improvements where the proposal is a Gaussian distribution where the mean depends on the gradient of the log target distribution and the covariance depends on some Brownian motion.
For instance, in~\citet{atchade2006adaptive,marshall2012adaptive}, the authors propose adaptive and geometrically ergodic Langevin chains. 
Yet, one important characteristic of our EBM problem, is that for each model parameter updated through the training iterations, the target distribution moves and the proposal should take that adjustment into account.
The techniques in~\citet{atchade2006adaptive,marshall2012adaptive} does not take the whole advantage of changing the proposal using the target distribution. 
In particular, the covariance matrix of the proposal is given by a stochastic approximation of the empirical covariance matrix. 
This choice seems completely relevant as soon as the convergence towards the stationary distribution is reached, in other words it would make sense towards the end of the EBM training, as the target distributions from a model parameter to the next one are similar. 
However, it does not provide a good guess of the variability during the first iterations since it is still very dependent on the initialization. 

Moreover, in ~\citet{girolami2011riemann}, the authors consider the approximation of a constant. Even though this simplification leads to ease of implementation, the curvature metric chosen by the authors need to be inverted, step that can be a computational burden if not intractable. 
Especially in the case we are considering in our paper, \ie\ ConvNet-based EBM, where the high nonlinearity would lead to intractable expectations. Therefore, in~\eqref{eq:step} and~\eqref{eq:anila} of Algorithm~\ref{alg:anila}, we propose a variant of Langevin Dynamics, in order to sample from a target distribution, using a full anisotropic covariance matrix based on the anisotropy and correlations of the target distribution, see the $\sqrt{\stepsize_t} \mathsf{B}_k$~erm.

\newpage

\section{Geometric Ergodicity of \algo\ }\label{sec:theory}
We will present our theoretical analysis for the Markov Chain constructed using Line 3-4 of Algorithm~\ref{alg:anila}. Let $\Theta$ be a subset of $\rset^d$ for some integer $d >0$.
We denote by $\zset$ the measurable space of $\rset^\ell$ for some integer $\ell >0$.
We define a family of stationary distribution $\left(\pi_\theta(z) \right)_{\theta \in \Theta}$, probability density functions with respect to the Lebesgue measure on the measurable space $\zset$. 
 This family of p.d.f. defines the stationary distributions of our newly introduced sampler.

\subsection{Notations and Assumptions}
For any chain state $z \in \zset$, we denote by $\Pi_\theta(z,\cdot)$ the transition kernel as defined in the \algo\ update in Line 4 of Algorithm~\ref{alg:anila}.
The objective of this section is to rigorously show that each transition kernel $\pi_\theta$, for any parameter $\theta \in \Theta$ is geometrically ergodic and that this result holds for any compact subset $\mathcal{C} \in \zset$.
As a background note, a Markov chain, as constructed Line 4, is said to be geometrically ergodic when $k$ iterations of the same transition kernel is converging to the stationary distribution of the chain with a geometric dependence on $k$. 

As in~\citet{allassonniere2015convergent}, we state the assumptions required for our analysis.
The first one is related to the continuity of the gradient of the log posterior and the unit vectors pointing in the direction of the sample $z$ and in the direction of the gradient of the log posterior distribution at $z$:
\begin{assumption}\label{ass:bounded}
For all $\theta \in \Theta$, the structural model $f_\theta(\cdot)$ satisfies:
\begin{equation}\notag
\begin{split}
\lim \limits_{|z| \to \infty} \frac{z}{|z|} \nabla f_{\theta}(z)  = - \infty \eqsp, \; \;
 \lim \sup \limits_{|z| \to \infty} \frac{z}{|z|} \frac{\nabla f_{\theta}(z) }{|\nabla f_{\theta}(z) |} < 0 \eqsp.
\end{split}
\end{equation}
\end{assumption}

Besides, we assume some regularity conditions of the stationary distributions with respect to~$\theta$:
\begin{assumption}\label{ass:contlogpi}
$\theta \to \pi_\theta$ and $\theta \to \nabla \log \pi_\theta$ are continuous on $\Theta$.
\end{assumption}

For a positive and finite function noted $V: \zset \mapsto \rset$, we define the V-norm distance between two arbitrary transition kernels $\Pi_1$ and $\Pi_2$ as follows:
\beq\notag
\| \Pi_1 - \Pi_2 \|_V \eqdef \sup \limits_{z \in \zset} \frac{\| \Pi_1(z, \cdot) - \Pi_2(z, \cdot) \|_V }{V(z)} \eqsp.
\eeq

The definition of this norm allows us to establish a convergence rate for our sampling method by deriving an upper bound of $\| \Pi_\theta^k - \pi_\theta \|_V$ where $k >0$ denotes the number of MCMC transitions.
We recall that $\Pi_\theta$ is the transition kernel defined by Line 4 of Algorithm~\ref{alg:anila} and $\pi_\theta$ is the stationary distribution of our Markov chain at a given EBM model $\theta$.
This quantity characterizes how close to the target distribution our chain is getting, after a finite time of iterations and will eventually formalize the \emph{V-uniform ergodicity} of our method.
We specify that strictly speaking, $\pi_\theta$ is a probability measure, and not a transition kernel. 
However $\| \Pi_\theta^k - \pi_\theta \|_V$ is well-defined if we consider $\pi_\theta$ as~a~kernel:
\beq\notag
\pi(z, \mathcal{C}) \eqdef \pi(\mathcal{C}) \quad \textrm{for} \quad \mathcal{C} \subset \zset, \quad z \in \zset \eqsp.
\eeq
Here, for some $\beta \in ] 0,1[$ we define the $V_\theta$ function, also know as the \emph{drift}, for all $z \in \zset$ as follows: 
\beq\label{eq:driftfunction}
V_\theta(z) \eqdef c_\theta \pi_\theta(z)^{-\beta} \eqsp,
\eeq
where $c_\theta$ is a constant, with respect to the chain state $z $, such that for all $z \in \zset$, $V_\theta(z) \geq 1$.
Note that the V norm depends on the chain state noted $z$ \emph{and} of the global model parameter $\theta$ varying through the optimization procedure.
Yet, in both main results, the ergodicity and the convergence rate, including the underlying drift condition, are established uniformly on the parameter space $\Theta$.
We also define the auxiliary functions, independent of the parameter~$\theta$~as:
\beq\label{eq:vfunctions}
V_1(z)  \eqdef \inf \limits_{\theta \in \Theta} V_\theta(z) \quad \textrm{and} \quad V_2(z)  \eqdef \sup \limits_{\theta \in \Theta} V_\theta(z) \eqsp,
\eeq
and assume the following:
\begin{assumption}\label{ass:V2}
There exists a constant $a_0 > 0$ such that for all $\theta \in \Theta $ and $z \in \zset$, the function $V_2^{a_0}(z)$, defined in~\eqref{eq:vfunctions}, is integrable against the kernel $\Pi_\theta(z, \cdot)$ and we have
\beq\notag
 \lim \sup  \limits_{a \to 0}  \sup \limits_{\theta \in \Theta, z \in \zset} \Pi_\theta V_2^a(z) = 1 \eqsp.
\eeq
\end{assumption}

\vspace{-0.2in}

\subsection{Convergence Results}
The result consists in showing V-uniform ergodicity of the chain, the irreducibility of the transition kernels and their aperiodicity, following~\citet{meyn2012markov,allassonniere2015convergent}. 
We also prove a drift condition which states that the transition kernels tend to bring back elements into a small set.
Then, V-uniform ergodicity of the transition kernels $(\Pi_\theta)_{\theta \in \Theta}$ boils down from the latter proven drift condition.

\vspace{0.1in}\noindent
\textbf{Important Note:} The stationary distributions depend on $\theta \in \Theta$ as they vary at each model update during the EBM optimization phase.
Thus uniform convergence of the chain is important in order to characterize the sampling phase \emph{throughout the entire training phase}.
Particularly at the beginning, the shape of the distributions one needs to sample from varies a lot from a parameter to another.

Theorem~\ref{thm:thm1} shows two important convergence results for our sampling method. 
First, it establishes the existence of a small set $\mathcal{O}$ leading to the crucially needed aperiodicity of the chain and ensuring that each transition moves towards a better state.
Then, it provides a uniform ergodicity result of our sampling method in \algo, via the so-called \emph{drift condition} providing the guarantee that our transition kernels $(\Pi_\theta)_{\theta \in \Theta}$ attract the states into the small set $\mathcal{O}$.
Moreover, the independence on the EBM model parameter $\theta$ of $V$ in~\eqref{thm:main2} leads to \emph{uniform} ergodicity as shown in the ~Corollary~\ref{coro:coro1}.

\begin{theo}\label{thm:thm1}
Assume H\ref{ass:bounded}-H\ref{ass:V2}.
For any $\theta \in \Theta$, there exists a drift function $V_\theta$, a set $\mathcal{O} \subset \zset$, a constant $0 < \epsilon \leq 1$ such that 
\beq\label{thm:main1}
\Pi_\theta(z, \bset) \geq  \epsilon \int_{\bset} \mathsf{1}_\xset(z)  \textrm{d}y \eqsp.
\eeq
Moreover there exists $0 < \mu < 1$, $\delta > 0$ and a drift function $V$, independent of $\theta$ such that for all $z \in \zset$:
\beq\label{thm:main2}
\Pi_\theta V(z) \leq \mu V(z) + \delta \mathsf{1}_{\mathcal{O}}(z) \eqsp.
\eeq
\end{theo}

\vspace{-0.1in}

\begin{coro}\label{coro:coro1}
Assume H\ref{ass:bounded}-H\ref{ass:V2}.
A direct consequence of Theorem~\ref{thm:thm1} is that the family of transition kernels $(\Pi_\theta)_{\theta \in \Theta}$ are uniformly ergodic,\ie\ for any compact $\mathcal{C} \subset \zset$, there exist constants $\rho \in ]0,1[$ and $e >0$ such for any MCMC iteration $k > 0$,we have:
\beq\label{coro:main}
\sup \limits_{z \in \mathcal{C}} \| \Pi_\theta^k u(\cdot) - \pi_\theta u(\cdot) \|_{V} \leq e \rho^k \| u \|_{V} \eqsp,
\eeq
where $V$ is the drift function  in Theorem~\ref{thm:thm1} and $u(\cdot)$ is any bounded function we apply a transition to.
\end{coro}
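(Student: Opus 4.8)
The plan is to deduce the geometric rate \eqref{coro:main} directly from the two inequalities already furnished by Theorem~\ref{thm:thm1}, invoking the standard $V$-uniform ergodicity machinery of \citet{meyn2012markov} as exploited in \citet{allassonniere2015convergent}. Indeed, \eqref{thm:main1} and \eqref{thm:main2} are precisely the hypotheses of that machinery: \eqref{thm:main1} exhibits $\mathcal{O}$ as a small set through a one-step minorization with constant $\epsilon$, while \eqref{thm:main2} is a one-step Foster--Lyapunov drift of geometric type toward $\mathcal{O}$ with contraction factor $\mu < 1$. The corollary is therefore a consequence rather than a fresh argument, and the work lies in checking the structural prerequisites and, above all, the uniformity of the constants.

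First I would record the two structural consequences of \eqref{thm:main1}. The minorization bounds $\Pi_\theta(z, \cdot)$ below by a fixed multiple of the Lebesgue measure restricted to the small set, which yields $\psi$-irreducibility (with $\psi$ the Lebesgue measure) and, since the minorizing component charges the small set itself, aperiodicity. With irreducibility, aperiodicity, the small set $\mathcal{O}$, and the geometric drift \eqref{thm:main2} in hand, the regeneration/coupling argument of \citet{meyn2012markov} produces constants $R < \infty$ and $\rho \in \, ]0,1[$ with
\beq\notag
\| \Pi_\theta^k(z, \cdot) - \pi_\theta \|_V \leq R\, V(z)\, \rho^k \eqsp,
\eeq
for every $z \in \zset$ and every $k > 0$. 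The passage to the stated functional form follows from the duality $| (\Pi_\theta^k(z,\cdot) - \pi_\theta)(u) | \leq \| u \|_V \, \| \Pi_\theta^k(z,\cdot) - \pi_\theta \|_V$, valid for any $u$ of finite $V$-norm, after which I take the supremum over $z \in \mathcal{C}$; since $V$ is continuous under H\ref{ass:contlogpi} it is bounded on the compact $\mathcal{C}$, so absorbing $R \sup_{z \in \mathcal{C}} V(z)$ into a single constant $e$ delivers \eqref{coro:main}.

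The main obstacle, and the only nonroutine point, is to guarantee that $e$ and $\rho$ can be chosen independently of $\theta$. A naive application of the ergodic theorem gives $\theta$-dependent constants $R_\theta, \rho_\theta$, which would not suffice, because the stationary law $\pi_\theta$ varies throughout the EBM training. To remove this dependence I would appeal to the explicit quantitative versions of the geometric convergence bound (in the spirit of Rosenthal's coupling estimates), in which $R$ and $\rho$ depend on the problem data only through the drift constants $\mu, \delta$, the minorization constant $\epsilon$, and $\sup_{z \in \mathcal{O}} V(z)$. The essential leverage is that Theorem~\ref{thm:thm1} supplies a single drift function $V$ together with a single triple $(\mu, \delta, \epsilon)$ and a single small set $\mathcal{O}$ valid simultaneously for all $\theta \in \Theta$ --- this is exactly where the $\theta$-independence of $V$ in \eqref{thm:main2} does its work. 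Since $\sup_{z \in \mathcal{O}} V(z)$ is then also $\theta$-free, the quantitative constants close in terms of purely uniform quantities, and the delicate part is to verify that they genuinely do so rather than re-entering a hidden $\theta$-dependence through $\pi_\theta$.
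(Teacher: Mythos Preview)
Your proposal is correct and aligns with the paper's treatment: the paper does not supply a separate proof of Corollary~\ref{coro:coro1} at all, instead stating it as ``a direct consequence of Theorem~\ref{thm:thm1}'' and deferring to the standard $V$-uniform ergodicity machinery of \citet{meyn2012markov} and the development in \citet{allassonniere2015convergent}. Your write-up in fact goes further than the paper by making explicit the passage from minorization and drift to irreducibility, aperiodicity, the Meyn--Tweedie bound, and---most usefully---by articulating why the constants $e,\rho$ are $\theta$-free (namely because Theorem~\ref{thm:thm1} furnishes a single $V$, a single $(\mu,\delta,\epsilon)$, and a single small set $\mathcal{O}$ uniformly over $\Theta$), a point the paper leaves implicit.
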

While Theorem~\ref{thm:thm1} is critical for proving the aperiodicity and irreducibility of the chain, we establish the geometric convergence speed of the chain. We do not only show the importance of the \emph{uniform} ergodicity of the chain, which makes it appealing for the EBM training since the model parameter $\theta$ is often updated, but we also derive a geometrical rate in Corollary~\ref{coro:coro1}.

We encourage the readers to read through the sketch of the main Theorem of our paper provided on the first page of the supplemental as we give the important details leading to the desired ergodicity results.
Those various techniques are common in the MCMC literature and we refer the readers to several MCMC handbooks such as~\citet{neal2011mcmc,meyn2012markov} for more understanding.

\section{Numerical Experiments}\label{sec:numericals}

We conduct a collection of experiments to show the effectiveness of our method, both on synthetic and real datasets.
After verifying the advantage of \algo\ on a Gaussian Mixture Model (GMM) retrieving the synthetic data observations, we then investigate its performance when learning a distribution over high-dimensional natural images such as pictures of flowers, see the Flowers dataset in~\citet{nilsback2008automated}, or general concepts featured in CIFAR-10~\citep{krizhevsky2009learning}.
For both methods, we use the Frechet Inception Distance (FID), as a reliable performance metrics as detailed in~\citet{heusel2017gans}.
In the sequel, we tune the learning rates over a fine grid and report the best result for all methods.
For our method \algo, the threshold parameter $\thresh$, crucial for the implementation of the stepsize~\eqref{eq:step} is tuned over a grid search as well.
As mentioned above, we also define a Brownian motion as $B_k \eqdef \epsilon \, \xi_k$, and tune the scaling factor $\epsilon$ for~better~performances.

\subsection{Toy Example: Gaussian Mixture Model}

\textbf{Datasets.}
We first demonstrate the outcomes of both methods including our newly proposed \algo\ for low-dimensional toy distributions.
We generate synthetic 2D rings data and use an EBM to learn the true data distribution and put it to the test of generating new synthetic samples.

\vspace{0.1in}\noindent
\textbf{Methods and Settings.}
We consider two methods. 
Methods are ran with \emph{nonconvergent} MCMC, \ie, we do not necessitate the convergence to the stationary distribution of the Markov chains.
The number of transitions of the MCMC is set to $K= 100$ per EBM iteration. 
We use a standard deviation of $0.15$ as in~\citet{nijkamp2020anatomy}.
Both methods have a constant learning rate of $0.14$.
The value of the threshold $\thresh$ for our \algo\ method is set to $\thresh = 0.01$.
The total number of EBM iterations is set to $T = 10\,000$.
The global learning rate $\eta$ is set to a constant equal to $0.0001$.

\vspace{0.1in}\noindent
\textbf{Network architectures.} 
For the backbone of the EBM model, noted $f_\theta(\cdot)$ in~\eqref{eq:ebm}, we chose a CNN of $5$ 2D convolutional layers and Leaky ReLU activation functions, with the leakage parameter set to $0.05$.
The number of hidden neurons varies between $32$ and $64$.

\newpage

\begin{figure}[h]
\centering
\includegraphics[width=6in]{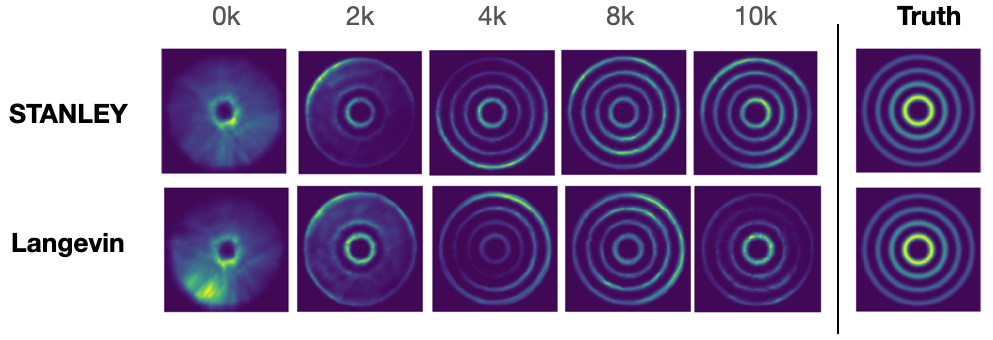}
  \caption{(Rings Toy Dataset) Top: our method, namely \algo\ Bottom: vanilla Langevin Dynamics. 
  Methods are used with the same backbone architecture. Generated samples are plotted through the iterations ever $2\,000$ steps.}
\label{fig:resultstoy}
\end{figure}

\vspace{0.1in}\noindent
\textbf{Results.}  
We observe Figure~\ref{fig:resultstoy} the outputs using both methods on the toy dataset.
While they achieve a great representation of the truth after a large number of iterations, we notice that \algo\ learns an energy that closely approximates the true density during the first thousands of iterations if the training process.
The sharpness of the data generated by \algo\ in the first iterations shows an empirically better ability to sample from the 2D dataset.

\subsection{Image Generation}

\textbf{Datasets.}
We run our method and several baselines detailed below on the \textit{CIFAR-10} dataset~\citep{krizhevsky2009learning} and the \emph{Oxford Flowers 102} dataset~\citep{nilsback2008automated}.
\textit{CIFAR-10}  is a popular computer-vision dataset of $50\,000$ training images and $10\,000$ test images, of size $32\times 32$. 
It is composed of tiny natural images representing a wide variety of objects and scenes, making the task of self supervision supposedly harder.
The \emph{Oxford Flowers 102} dataset is composed of 102 flower categories.
Per request of the authors, the images have large scale, pose and light variations making the task of generating new samples particularly challenging.

\begin{figure}[h]
\begin{center}
        \mbox{
        \includegraphics[width=3in]{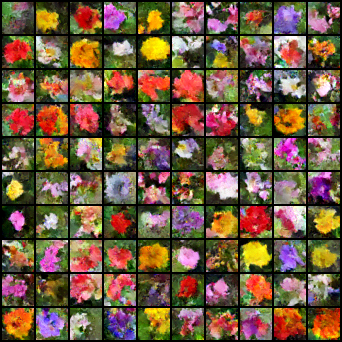}\hspace{0.2in}
        \includegraphics[width=3in]{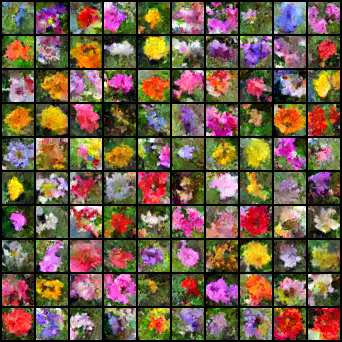}
        }
\end{center}
	\caption{Left: Langevin Method. Right: \algo\ . After 100k iterations.}
	\label{fig:flowers}
\end{figure}

\vspace{0.1in}\noindent
\textbf{Methods and Settings for the Flowers dataset.}
Nonconvergent MCMC are also used in this experiment and the number of MCMC transitions is set to $K = 50$.
Global learning parameters of the gradient descent update is set to $0.001$ for both methods.
We run each method during $T = 100\,000$ iterations and plot the results using the final vector of fitted parameters.

\vspace{0.1in}\noindent
\textbf{Methods and Settings for CIFAR-10.}
We employ the same nonconvergent MCMC strategies for this experiment.
The value of the threshold $\thresh$ for our \algo\ method is set to $\thresh = 0.0002$.
The total number of EBM iterations is set to $T = 100\,000$.
The global learning rate $\eta$ is set to a constant equal to $0.0001$.
In this experiment, we slightly change the last step of our method in Algorithm~\ref{alg:anila}.
Indeed, Line 11 in Algorithm~\ref{alg:anila} is not a plain Stochastic Gradient Descent here but we rather use the \textsc{Adam} optimizer~\citep{kingma2015adam}.
The scaling factor of the Brownian motion is~$0.01$.

\vspace{0.1in}\noindent
\textbf{Network architectures for both.} 
The backbone of the energy function for this experiment is a vanilla ConvNet composed of $3 \times 3$ convolution layers with stride $1$.
$5$ Convolutional Layers using ReLU activation functions are stacked.

\newpage

\vspace{0.1in}\noindent
\textbf{Results.} 
\textit{(Flowers)} \hspace{0.05in} Visual results are provided in Figure~\ref{fig:flowers} where we have used both methods to generate synthetic images of flowers.
For each threshold iterations number ($5\,000$ iterations) we sample $10\,000$ synthetic images from the EBM model under the current vector of parameters and use the same number of data observations to compute the FID similarity score as advocated in~\citet{heusel2017gans}.
The evolution of the FID values are reported in Figure~\ref{fig:fidall} (Left) through the iterations.
We note that our method outperforms the other baselines for all iterations threshold, including the Vanilla Langevin (in blue) which is an ablated form our \algo\ (no adaptive stepsize).

\begin{figure}[h]
\begin{center}
\mbox{
        \includegraphics[width=.5\textwidth]{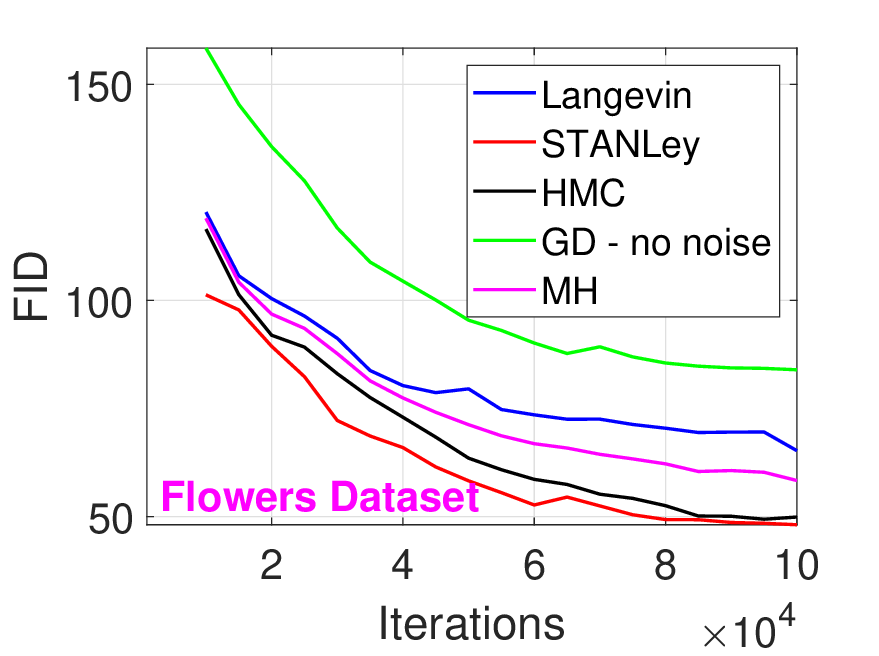} \hspace{-0.2in}
        \includegraphics[width=.5\textwidth]{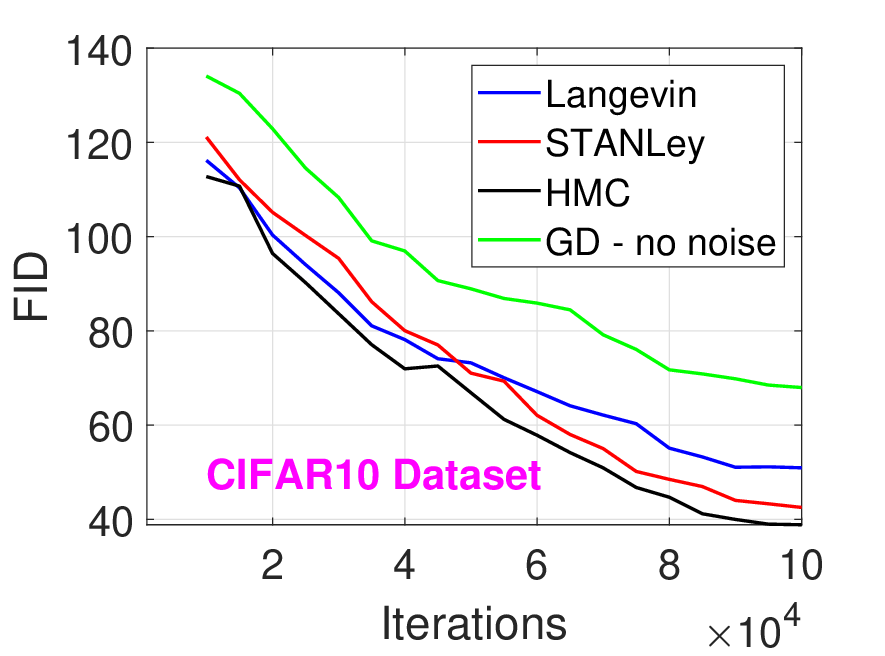}
}
\end{center}
	\caption{(FID values per method against 100k iterations elapsed). Left: Oxford Flowers dataset. Right: CIFAR-10.}
	\label{fig:fidall}
\end{figure}

\newpage

\textit{(CIFAR-10)} \hspace{0.05in} Visual results are provided in Figure~\ref{fig:cifar} where we have used both methods to generate synthetic images of flowers.
The FID values are reported in Figure~\ref{fig:fidall} (Right) and have been computed using $10\,000$ synthetic images from each model.
The similarity score is then evaluated every $5\,000$ iterations. 
While the FID curves for the Flowers dataset exhibits a superior performance of our method throughout the training procedure, we notice that in the case of CIFAR-10, vanilla method seems to be slightly better than \algo\. during the first iterations, \ie\ when the model is still learning the representation of the images.
Yet, after a certain number of iterations, we observe that \algo\ leads to more accurate synthetic images.
\emph{This behavior can be explained by the importance of incorporating curvature informed metrics into the training process when the parameter reaches a neighborhood of the solution}.

\begin{figure}[t]

    \mbox{\hspace{-0.2in}
        \includegraphics[width=.21\textwidth]{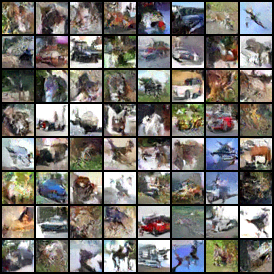}
        \includegraphics[width=.21\textwidth]{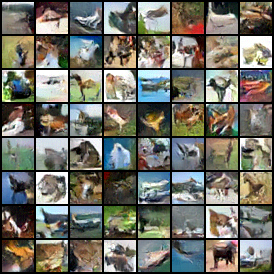}
	\includegraphics[width=.21\textwidth]{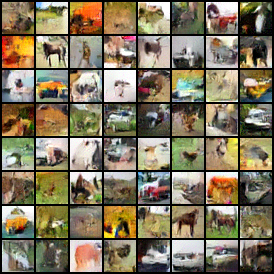}        
        \includegraphics[width=.21\textwidth]{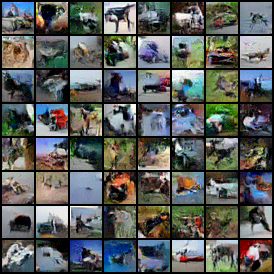}
        \includegraphics[width=.21\textwidth]{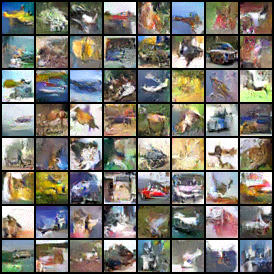}
        }
    \caption{1: Langevin 2: \algo\ 3: MH 4: HMC 5: GD without noise. After 100k iterations.}
	\label{fig:cifar}
\end{figure}

\subsection{Image Inpainting}

The image inpainting experiment aims to fill missing regions of a damaged image with synthesized content.

 \begin{figure}[b!]
 
\vspace{-0.2in}

 \centering
   \mbox{
        \includegraphics[width=3.2in]{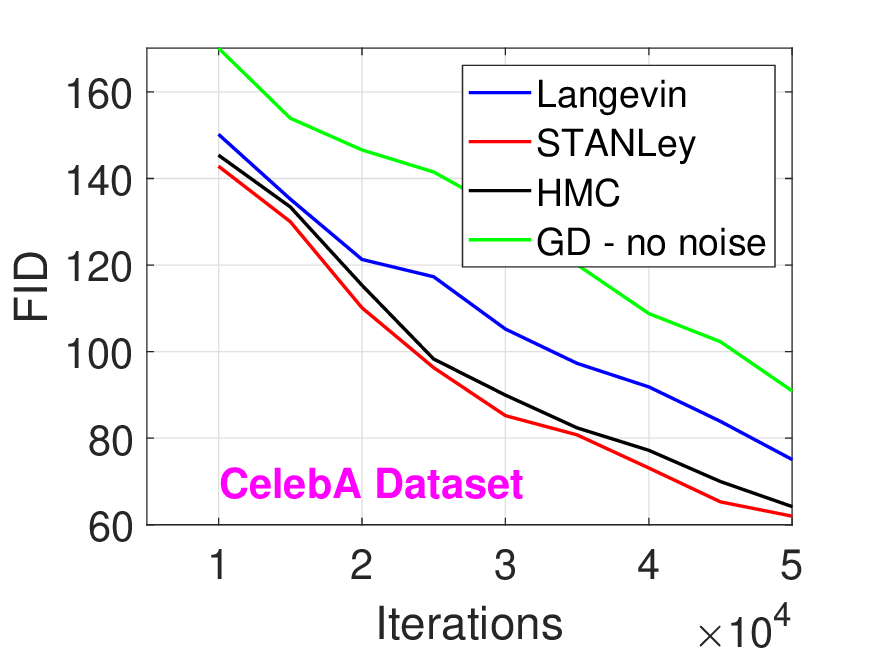}
        
    }
	\caption{FID values per method against 50k iterations elapsed.}
	\label{fig:fidceleb}\vspace{-0.15in}
\end{figure}

\vspace{0.1in}\noindent
\textbf{Datasets.}
We use the CelebA dataset~\citep{liu2015deep} to evaluate our learning algorithm, which contains more than 200k RGB color facial image. We use 100k images for training and 100 images for testing.

\vspace{0.1in}\noindent
\textbf{Methods and Settings.}
Nonconvergent MCMC are also used in this experiment and the number of MCMC transitions is set to $K = 50$.
Global learning parameters of the gradient descent update is set to $0.01$.
We run each method during $T = 50\,000$ iterations and plot the results using the final vector of fitted parameters.

\vspace{0.1in}\noindent
\textbf{Results.} 
Figure~\ref{fig:fidceleb} displays the FID curves for all methods.
We note that along the iterations, the \algo\ outperforms the other baseline and is similar to HMC, while only requiring first order information for the computation of the stepsize whereas HMC computes second order quantity.
 Even with second order information, the HMC samples does not lead to a better FID.

\begin{figure}[h]
\centering
\subfloat{\includegraphics[width=5in]{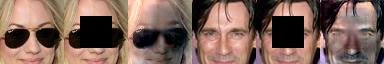}} \\
   \subfloat{\includegraphics[width=5in]{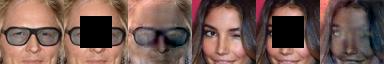}}
   \caption{Image inpainting. \textbf{Top:} \algo\ \textbf{Bottom:} Vanilla Langevin.}
   \label{fig:celebinpainting}
 \end{figure}
 
Figure~\ref{fig:celebinpainting} shows the visual check on different samples between our method and its ablated form, \ie\ the vanilla Langevin sampler based EBM. 
 In addition to a measurable metrics comparison, visual check provide empirical insights on the effectiveness of adding a curvature-informed stepsize in the sampler of our generative model as in \algo .

\section{Conclusion}\label{sec:conclusion}

We propose in this paper, an improvement of the so-called MCMC based Energy-Based models.
In the particular case of a highly nonlinear structural model of the EBM, more precisely a Convolutional Neural Network in our paper, we tackle the complex task of sampling negative samples from the energy function.
The multi-modal and highly curved landscape one must sample from inspire our technique called \algo, and based on a Stochastic Gradient Anisotropic Langevin Dynamics, that updates the Markov Chain using an anisotropic stepsize in the vanilla Langevin update.
We provide strong theoretical guarantees for our novel method, including uniform ergodicity and geometric convergence rate of the transition kernels to the stationary distribution of the chain.
Our method is tested on several benchmarks data and image generation tasks including toy and real datasets. 

\bibliographystyle{plainnat}
\bibliography{refs_scholar}

\newpage

\appendix
\section{Appendix: Proofs of the Theoretical Results}\label{app:proofs}

\subsection{Sketch of the Proof of Theorem \ref{thm:thm1}}

\textbf{Notations for the proof:}
We denote by $z \to \prop{\theta}(z',z)$, the pdf of the Gaussian proposal of Line 3 for any current state of the chain $z' \in \zset$ and dependent on the EBM model parameter $\theta$.
The transition kernel from $z$ to $z'$ is denoted by $\Pi_\theta(z, z')$.
$\zset$ is a subset of $\rset^\ell$ and $\bset$ is a Borel set of $\rset^\ell$.

The proof of our results are divided into two main parts.
We first prove the existence of a small set for our transition kernel $\Pi_\theta$, noted $\mathcal{O}$ showing that for any state, the Markov Chain moves away from it.
It constitutes the first step toward proving its irreducibility and aperiodicity.
Then, we will establish the so-called \emph{drift condition}, also known as the Foster-Lyapunov condition, crucial to proving the convergence of the chain.
The drift condition ensures the recurrence of the chain as the property that a chain returns to its initial state within finite time, see~\citet{roberts1998optimal,roberts2004general} for more details. 
Uniform ergodicity is then established as a consequence of those drift conditions and thus proving~\eqref{thm:main2}.

\medskip
\noindent \textbf{(i) Existence of a small set: }
Let $\mathcal{O}$ be a compact subset of the state space $\zset$. We recall the definition of the transition kernel in the case of a Metropolis adjustment and for any model parameter $\theta \in \Theta$ and state $z \in \zset$:
\beq\notag
\begin{split}
\Pi_\theta(z, \bset) = & \int_{\bset} \alpha_\theta(z, y) \prop{\theta}(z,y) \textrm{d}y 
 + \mathsf{1}_{\bset(z)}\int_{\zset} (1 - \alpha_\theta(z, y)) \prop{\theta}(z,y) \textrm{d}y \eqsp,
\end{split}
\eeq
where we have defined the Metropolis ratio between two states $(z,y) \in \zset \times \bset$ as $\alpha_\theta(z, y) = \textrm{min}(1, \frac{\pi_\theta(z)  \prop{\theta}(z,y)}{\prop{\theta}(y,z) \pi_\theta(y)  })$.
Under H\ref{ass:bounded} and due to the fact that the threshold $\thresh$ leads to a symmetric positive definite covariance matrix with bounded non zero eigenvalues, then the following holds:
\beq\label{eq:boundgauss}
a n_{\sigma_1}(z - y) \leq \prop{\theta}(z,y)  \leq b n_{\sigma_2}(z - y)  \eqsp,
\eeq
for all $\theta \in \Theta$ and where $\sigma_1$ and $\sigma_2$ are the corresponding standard deviations of the two Gaussian distributions $n_{\sigma_1}$ and $n_{\sigma_2}$. 
We denote by $\rho_\theta$ the ratio $\frac{\pi_\theta(z)  \prop{\theta}(z,y)}{\prop{\theta}(y,z) \pi_\theta(y)  }$ and define the quantity 
\beq\label{eq:delta_main}
\delta = \textrm{inf}(\rho_\theta(z,y), \theta \in \Theta,  z \in \mathcal{O} ) > 0 \eqsp,
\eeq
where we have used assumptions H\ref{ass:bounded} and H\ref{ass:contlogpi}.
Then,
\beq\notag
\begin{split}
\Pi_\theta(z, \bset) &\geq  
\int_{\bset \cap \xset} \alpha_\theta(z, y) \prop{\theta}(z,y) \textrm{d}y  \geq \textrm{min}(1, \delta) m \int_{\bset} \mathsf{1}_\xset(z)  \textrm{d}y \eqsp.
\end{split}
\eeq
According to~\eqref{eq:delta_main}, we can find a compact set $\mathcal{O}$ such that $\Pi_\theta(z, \bset) \geq  \epsilon$ where $\epsilon = \textrm{min}(1, \delta) m \textbf{Z}$ where $\textbf{Z}$ is the normalizing constant of the pdf $\frac{1}{\textbf{Z}}\mathbf{1}_\xset(z)  \textrm{d}y$ and the proposal distribution is bounded from below by some quantity noted $m$.
The calculations above prove~\eqref{thm:main1}, \ie\ the existence of a small set for our family of transition kernels $(\Pi_\theta)_{\theta \in \Theta}$.

\medskip
\noindent \textbf{(ii) Drift condition and ergodicity: }
We begin by proving that $(\Pi_\theta)_{\theta \in \Theta}$ satisfies a drift property.
For a given EBM parameter $\theta \in \Theta$, we can see in~\citet{jarner2000geometric} that the drift condition boils down to proving that
\beq\notag
\sup \limits_{z \in \zset}  \frac{\Pi_\theta V_\theta(z)}{V_\theta(z)} < \infty \quad \textrm{and} \quad \lim \sup \limits_{|z| \to \infty}  \frac{\Pi_\theta V_\theta(z)}{V_\theta(z)} < 1 \eqsp,
\eeq
where $V_\theta$ is the \emph{drift function} defined in~\eqref{eq:driftfunction}
Let denote the acceptation set, \ie\ $\rho_\theta \geq 1$ by 
\beq\label{eq:accept}
\accept(z) \eqdef \{ y \in \zset, \rho_\theta(z,y) \geq 1 \}
\eeq
for any state $y \in \bset$ and its complementary set $\compaccept(z)$.
The remaining of the proof is composed of three main steps. 
\textcolor{magenta}{\textsc{Step (1)}} shows that for any $\theta \in \Theta$,
\begin{align}\notag
\lim \sup \limits_{|z| \to \infty}  \frac{\Pi_\theta V_\theta(z)}{V_\theta(z)} \leq 1 - \lim \inf \limits_{|z| \to \infty}  \int_{\accept(z)} \prop{\theta}(z,y)  \textrm{d}y \eqsp.
\end{align}
where the smoothness of a Gaussian pdf, assumption H\ref{ass:bounded} and a collection of inequalities based on \eqref{eq:accept}, its complentary set and the interval in \eqref{eq:boundgauss}.
Then, using an important intermediary result, stated in Lemma~\ref{lem:cone}, that initiates a relation between the set of accepted states noted $\accept(z)$ and the cone $\mathcal{P}(z)$ designed so that it does not depend on the model parameter $\theta$. 

\vspace{0.2in}

\begin{lemm}\label{lem:cone}
Define $\mathcal{P}(z) \eqdef \{ z- \ell \frac{z}{|z|} - \kappa \nu , \, \textrm{with} \quad \kappa < a - \ell  , \nu \in \{ \nu \in \rset^d, \| \nu \| < 1\}, |\nu - \frac{z- \ell \frac{z}{|z|} }{|z- \ell \frac{z}{|z|} |} \leq \frac{\epsilon}{2}   \}$ and $\accept(z) \eqdef \{ y \in \zset, \rho_\theta(z,y) \geq 1 \}$. Then for $z \in \zset$, $\mathcal{P}(z) \subset \accept(z)$.
\end{lemm}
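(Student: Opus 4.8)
The plan is to show that any $y \in \mathcal{P}(z)$ satisfies $\rho_\theta(z,y) \geq 1$, i.e.\ that the Metropolis ratio $\pi_\theta(z)\prop{\theta}(z,y) \geq \pi_\theta(y)\prop{\theta}(y,z)$, uniformly in $\theta$. The geometric picture behind $\mathcal{P}(z)$ is that a point $y \in \mathcal{P}(z)$ is obtained from $z$ by first stepping a distance $\ell$ inward along the radial direction $-z/|z|$ and then moving by a small controlled perturbation $-\kappa\nu$ whose direction $\nu$ is forced (up to angular tolerance $\epsilon/2$) to agree with the remaining inward radial direction. Since $\|\nu\|<1$ and $\kappa < a-\ell$, the displacement $|z-y|$ stays below the fixed bound $a$, so the Gaussian proposal densities are comparable via~\eqref{eq:boundgauss}; and since every component of the move points ``inward,'' $|y| < |z|$, so $\pi_\theta(y) \geq \pi_\theta(z)$ by H\ref{ass:bounded} (the radial derivative of $f_\theta$ is eventually negative, hence $f_\theta$ decreases outward, hence $\pi_\theta$ is larger at the inner point).

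First I would write $w \eqdef z - \ell\frac{z}{|z|}$ and note $|w| = |z| - \ell$ for $|z|$ large. Then $y = w - \kappa\nu$ with $\nu$ nearly aligned to $w/|w|$, so $\frac{w}{|w|}\cdot\nu \geq 1 - \epsilon^2/8 > 0$ by the angular constraint $|\nu - w/|w|| \leq \epsilon/2$ together with $\|\nu\| < 1$; expanding $|y|^2 = |w|^2 - 2\kappa\, w\cdot\nu + \kappa^2\|\nu\|^2$ and using $\kappa < a - \ell$ small, I would verify $|y| < |w| \leq |z|$. This gives $\pi_\theta(z) \leq \pi_\theta(y)$ — wait, I want the reverse; the point is that $y$ is the \emph{inner} point, so $f_\theta(y) \geq f_\theta(z)$ and thus $\pi_\theta(y) \geq \pi_\theta(z)$, which makes the ratio $\pi_\theta(z)/\pi_\theta(y) \leq 1$ work \emph{against} us. So the argument must instead balance this against the proposal ratio: $\prop{\theta}(z,y)/\prop{\theta}(y,z)$ must be large enough. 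The correct accounting is that the Langevin proposal from $z$ is centered near $z + \frac{\stepsize}{2}\nabla f_\theta(z)$, and by H\ref{ass:bounded} $\nabla f_\theta(z)$ points strongly inward for large $|z|$ (indeed $\frac{z}{|z|}\cdot\nabla f_\theta(z)\to-\infty$), so $y$ — being an inward point — sits close to the proposal mean from $z$, whereas $z$ sits far from the proposal mean from $y$; hence $\prop{\theta}(z,y) \gg \prop{\theta}(y,z)$, and this dominates the bounded factor $\pi_\theta(z)/\pi_\theta(y)$.

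Concretely I would: (1) use~\eqref{eq:boundgauss} and H\ref{ass:bounded} to lower-bound $\prop{\theta}(z,y)$ by $a\,n_{\sigma_1}(z-y)$ and upper-bound $\prop{\theta}(y,z)$ by $b\,n_{\sigma_2}(z-y)$, reducing the ratio to a computable Gaussian expression in $|z-y|$ plus a drift correction; (2) use the cone constraints ($\kappa < a - \ell$, $\|\nu\| < 1$, angular tolerance $\epsilon/2$) to show the exponent is nonnegative, i.e.\ that the drift term exactly compensates; (3) invoke H\ref{ass:contlogpi} and compactness in $\theta$ to make all constants ($a,b,m,\epsilon$, and the threshold $\thresh$ controlling the stepsize) uniform in $\theta \in \Theta$, so that the inclusion $\mathcal{P}(z) \subset \accept(z)$ holds for every $\theta$ simultaneously. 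The main obstacle I anticipate is step (2): pinning down the precise relationship between the cone's aperture/length parameters and the Gaussian tails so that $\rho_\theta(z,y) \geq 1$ holds \emph{for all} $y$ in the cone and not merely asymptotically — this is the bookkeeping that justifies the specific constants $a - \ell$ and $\epsilon/2$ in the definition of $\mathcal{P}(z)$, and it is where the interplay between the anisotropic stepsize~\eqref{eq:step} and the inward-drift assumption H\ref{ass:bounded} must be made quantitative.
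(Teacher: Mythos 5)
Your geometric reading of $\mathcal{P}(z)$ --- an inward radial step of length $\ell$ followed by a small perturbation $\kappa\nu$ whose direction is within $\epsilon/2$ of the radial one, with total displacement below $a$ so that~\eqref{eq:boundgauss} applies --- is the right picture, and matches the paper's setup. But there are two genuine gaps. First, you infer $\pi_\theta(y)\ge\pi_\theta(z)$ from $|y|<|z|$ alone; H\ref{ass:bounded} only controls \emph{radial} directional derivatives of $f_\theta$, and $y$ does not lie on the ray through $z$, so comparing norms does not by itself give a density comparison. The paper splits the comparison in two: it compares $\pi_\theta(y)$ to $\pi_\theta(w)$, $w\eqdef z-\ell\frac{z}{|z|}$, by a mean-value argument along the direction $\nu$, where the angular constraint $|\nu-\frac{w}{|w|}|\le\frac{\epsilon}{2}$ combined with the second condition of H\ref{ass:bounded} (which yields $\frac{z}{|z|}\cdot\frac{\nabla f_\theta(z)}{|\nabla f_\theta(z)|}\le-\epsilon$) forces the directional-derivative term to have the right sign, giving $\pi_\theta(y)\ge\pi_\theta(w)$; only the comparison of $\pi_\theta(w)$ with $\pi_\theta(z)$ is radial. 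This two-step decomposition is exactly what the cone's angular tolerance $\epsilon/2$ is for, and it is absent from your argument.

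Second, your mechanism for the proposal ratio points the wrong way. You argue that the Langevin drift makes $\prop{\theta}(z,y)\gg\prop{\theta}(y,z)$ because $\nabla f_\theta(z)$ points strongly inward; but the anisotropic stepsize~\eqref{eq:step} caps $\stepsize_t\,|\nabla f_{\theta}(z)|$ at $\thresh$, so the drift term in the proposal mean is \emph{bounded} --- this is precisely what makes the two-sided sandwich~\eqref{eq:boundgauss} between centered Gaussians possible, and it means the proposal ratio on the ball $B(z,a)$ is merely bounded by a constant $c_a$, not favorable. In the paper's proof the compensation comes from the other factor: the first condition of H\ref{ass:bounded} makes the density gain of the radial step of length $\ell$ as large as desired for $|z|$ large, so $\pi_\theta(z)/\pi_\theta(w)\le c_a\le\inf_{y\in B(z,a)}\prop{\theta}(y,z)/\prop{\theta}(z,y)$ as in~\eqref{eq:gauss}, and the target ratio dominates the bounded proposal asymmetry. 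So the roles are reversed relative to your plan: the target density does the work and the proposal ratio only needs to be controlled, not the other way around. With these two corrections your outline would align with the paper's argument.
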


\vspace{0.2in}

\noindent Noting the limit inferior as $\varliminf$, 
\textcolor{magenta}{\textsc{Step (2)}} establishes that $1 - \varliminf \limits_{|z| \to \infty}  \int_{\accept(z)} \prop{\theta}(z,y)  \textrm{d}y \leq 1 - c$ where $c$ is a constant, \emph{independent of all the other quantities} towards showing uniformity of the final result.
Finally, 
\textcolor{magenta}{\textsc{Step (3)}} uses the inequality $\Pi_\theta V_\theta(z) \leq \bar{\mu} V_\theta(z) + \bar{\delta} \mathsf{1}_{\mathcal{O}}(z)$ dependent of $\theta$ and defines the V function, independent of $\theta$, as $V(z) \eqdef V_1(z)^\alpha V_2(z)^{2\alpha}$ in order to establish the main result of Theorem~\ref{thm:thm1}, \ie
\begin{align}\notag
\Pi_\theta V(z) \leq \left(\frac{\bar{\mu}}{2 \epsilon^2} + \frac{\epsilon^2}{1 + \bar{\mu}} \right) V(z) +\frac{\bar{\delta}}{2 \epsilon^2} \mathsf{1}_{\mathcal{O}}(z) \eqsp.
\end{align}
Setting $\epsilon \eqdef \sqrt{\frac{\bar{\mu}(1+\bar{\mu})}{2}}$, $ \mu  \eqdef  \sqrt{\frac{2\bar{\mu}}{1+\bar{\mu}}}$ and $\delta \eqdef \frac{\bar{\delta}}{2 \epsilon^2}$ proves the uniformity of the inequality~\eqref{thm:main2}.

The complete proof is deferred in the appendix and is also developed in~\citet{allassonniere2015convergent} in the context of Bayesian Mixed Effect  models trained with the EM algorithm.


\subsection{Proof of Theorem~\ref{thm:thm1}}
\begin{Theorem*}
Assume H\ref{ass:bounded}-H\ref{ass:V2}.
For any $\theta \in \Theta$, there exists a drift function $V_\theta$, a set $\mathcal{O} \subset \zset$, a constant $0 < \epsilon \leq 1$ such that 
\beq
\Pi_\theta(z, \bset) \geq  \epsilon \int_{\bset} \mathsf{1}_\xset(z)  \textrm{d}y \eqsp.
\eeq
Moreover there exists $0 < \mu < 1$, $\delta > 0$ and a drift function $V$, now independent of $\theta$ such that for all $z \zset$:
\beq
\Pi_\theta V(z) \leq \mu V(z) + \delta \mathsf{1}_{\mathcal{O}}(z) \eqsp.
\eeq
\end{Theorem*}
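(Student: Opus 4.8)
The plan is to follow the classical Metropolis-within-Langevin recipe for geometric ergodicity, as laid out in \citet{jarner2000geometric} and adapted to the parameter-uniform setting of \citet{allassonniere2015convergent}, and to organize the argument around the two displays in the statement. First I would handle the minorization inequality~\eqref{thm:main1}. For this I would take $\mathcal{O}$ to be an arbitrary compact set, use the Gaussian sandwich bound~\eqref{eq:boundgauss} (which holds because the anisotropic stepsize~\eqref{eq:step} is clamped by $\thresh$, forcing the covariance of the proposal to have eigenvalues bounded away from $0$ and $\infty$ uniformly in $\theta$ and $z$), and then lower-bound the acceptance ratio $\rho_\theta(z,y)$ on $\mathcal{O}$ by a positive constant $\delta$ via H\ref{ass:bounded} and the continuity in H\ref{ass:contlogpi}, with compactness of $\Theta$ making the infimum~\eqref{eq:delta_main} strictly positive. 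Integrating $\alpha_\theta(z,y)\prop{\theta}(z,y)$ over $\bset\cap\xset$ then yields $\Pi_\theta(z,\bset)\geq \epsilon\int_{\bset}\mathsf{1}_\xset(z)\,\mathrm{d}y$ with $\epsilon=\min(1,\delta)\,m\,\mathbf{Z}$; this is routine once the sandwich bound is in place.

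The substance is in~\eqref{thm:main2}, the parameter-uniform drift condition. I would first prove a $\theta$-dependent drift $\Pi_\theta V_\theta(z)\leq\bar\mu V_\theta(z)+\bar\delta\mathsf{1}_{\mathcal{O}}(z)$ with $\bar\mu<1$, using $V_\theta=c_\theta\pi_\theta^{-\beta}$ as in~\eqref{eq:driftfunction}. The key reduction (from \citet{jarner2000geometric}) is that it suffices to control $\limsup_{|z|\to\infty}\Pi_\theta V_\theta(z)/V_\theta(z)$, and Step~(1) of the sketch bounds this by $1-\liminf_{|z|\to\infty}\int_{\accept(z)}\prop{\theta}(z,y)\,\mathrm{d}y$, splitting the kernel over the acceptance region $\accept(z)$ and its complement and using the Gaussian tails plus the first limit in H\ref{ass:bounded} (the drift $\frac{z}{|z|}\nabla f_\theta(z)\to-\infty$) to kill the contribution from $\compaccept(z)$. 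Then I would invoke Lemma~\ref{lem:cone}: the $\theta$-free cone $\mathcal{P}(z)$ sits inside $\accept(z)$, so the proposal mass landing in $\mathcal{P}(z)$ — which, by the second (angular) condition in H\ref{ass:bounded}, is bounded below by a constant $c>0$ \emph{independent of $\theta$ and of $z$} as $|z|\to\infty$ — gives the uniform bound $1-\liminf\int_{\accept(z)}\prop{\theta}\leq 1-c$ claimed in Step~(2). This hands us a drift constant $\bar\mu$ not depending on $\theta$.

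The final move, Step~(3), passes from the family $(V_\theta)$ to a single $\theta$-independent drift function. I would set $V\eqdef V_1^{\alpha}V_2^{2\alpha}$ (with $V_1,V_2$ as in~\eqref{eq:vfunctions} and $\alpha$ small, using H\ref{ass:V2} to guarantee $V_2^{a}$ is $\Pi_\theta$-integrable and that $\sup_{\theta,z}\Pi_\theta V_2^a(z)\to 1$ as $a\to0$), bound $V_1\leq V_\theta\leq V_2$ pointwise, and combine the $\theta$-dependent drift for $V_\theta$ with the near-invariance of small powers of $V_2$ to obtain $\Pi_\theta V(z)\leq\bigl(\tfrac{\bar\mu}{2\epsilon^2}+\tfrac{\epsilon^2}{1+\bar\mu}\bigr)V(z)+\tfrac{\bar\delta}{2\epsilon^2}\mathsf{1}_{\mathcal{O}}(z)$; choosing $\epsilon=\sqrt{\bar\mu(1+\bar\mu)/2}$ makes the bracket equal to $\mu=\sqrt{2\bar\mu/(1+\bar\mu)}<1$ and sets $\delta=\bar\delta/(2\epsilon^2)$, which is~\eqref{thm:main2}. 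The main obstacle I anticipate is exactly this uniformization: keeping every constant ($\delta$ in~\eqref{eq:delta_main}, $c$ in Step~(2), the exponents in $V$) genuinely free of $\theta$, which is where compactness of $\Theta$, the continuity assumption H\ref{ass:contlogpi}, and the $\theta$-independence of the cone in Lemma~\ref{lem:cone} all have to be used in concert; the geometric/tail estimates themselves are standard. Corollary~\ref{coro:coro1} then follows from~\eqref{thm:main1}–\eqref{thm:main2} by the standard drift-and-minorization theorem for $V$-uniform ergodicity \citep{meyn2012markov}, with the compact-set supremum in~\eqref{coro:main} controlled because $V$ is bounded on compacts.
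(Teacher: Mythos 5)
Your proposal follows essentially the same route as the paper's own proof: the same minorization via the Gaussian sandwich bound~\eqref{eq:boundgauss} and the infimum~\eqref{eq:delta_main}, the same Jarner--Hansen reduction and three-step drift argument built on Lemma~\ref{lem:cone}, and the same uniformization $V=V_1^{\alpha}V_2^{2\alpha}$ with identical final choices of $\epsilon$, $\mu$, and $\delta$. The argument is correct as outlined and matches the paper's proof in structure and substance.
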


\begin{proof}

Notations used throughout the proof are listed in Table~\ref{tab:notations}

\begin{table}[htbp]
\caption{Notations}
\begin{tabular}{r c p{11cm} }
\toprule
$\Pi_\theta$ & $\triangleq$ &  Transition kernel of the MCMC defined by~\eqref{eq:anila}\\
$\mathcal{O}$ & $\triangleq$ & Subset of $\rset^p$ and small set for kernel $\Pi_\theta$\\
$B(z,a)$  & $\triangleq$ & Ball around $z \in \zset$ of radius $a >0$\\
$\accept(z)$ & $\triangleq$ & Acceptance set at state $ \in \zset$ such that $\rho_\theta \geq 1$ \\
$\compaccept(z)$ & $\triangleq$ & Complementary set of  $\accept(z)$\\
$\prop{\theta}(z',z)$ & $\triangleq$ &  Probability density function of the Gaussian proposal\\
$\pi_{\theta}(\cdot)$ & $\triangleq$ &  Stationary/Target distribution under model $\theta \in \Theta$\\
$\Pi_\theta(z, z')$ & $\triangleq$ & Transition kernel from state $z$ to state $z'$\\
$n_{\sigma}(z)$ & $\triangleq$ & Pdf of a centered Normal distribution of standard deviation $\sigma >0$ \\
\bottomrule
\end{tabular}
\label{tab:notations}
\end{table}

The proof of our results are divided into two parts.
We first prove the existence of a set noted $\mathcal{O}$ as a small set for our family of transition kernels $(\Pi_\theta)_{\theta \in \Theta}$.
Proving a small set is crucial in order to show that for any state, the Markov Chain does not stay in the same state, and thus help in proving its irreducibility and aperiodicity.

Then, we will prove the drift condition towards a small set.
This condition is crucial to prove the convergence of the chain since it states that the kernels tend to attract elements into that set. 
finally, uniform ergodicity is established as a consequence of those drift conditions.

\medskip
\noindent \textbf{(i) Existence of small set: }
Let $\mathcal{O}$ be a compact subset of the state space $\zset$.
We also denote the probability density function (pdf) of the Gaussian proposal of Line 3 as $z \to \prop{\theta}(z',z)$ for any current state of the chain $z' \in \zset$ and dependent on the EBM model parameter $\theta$.
Given \algo's MCMC update, at iteration $t$, the proposal is a Gaussian distribution of mean $z_{t-1}^m+ \stepsize_t/2  \nabla f_{\theta_t}(z_{t-1}^m)$ and covariance $\sqrt{\stepsize_t} \mathsf{B}_t$.

We recall the definition of the transition kernel in the case of a Metropolis adjustment and for any model parameter $\theta \in \Theta$ and state $z \in \zset$:

\beq 
\Pi_\theta(z, \bset) = \int_{\bset} \alpha_\theta(z, y) \prop{\theta}(z,y) \textrm{d}y + \mathsf{1}_{\bset(z)}\int_{\zset} (1 - \alpha_\theta(z, y)) \prop{\theta}(z,y) \textrm{d}y \eqsp,
\eeq

where we have defined the Metropolis ratio between two states $z \in \zset$ and $y \in \bset$ as $\alpha_\theta(z, y) = \textrm{min}(1, \frac{\pi_\theta(z)  \prop{\theta}(z,y)}{\prop{\theta}(y,z) \pi_\theta(y)  })$.
Thanks to Assumption H\ref{ass:bounded} and due to the fact that the threshold $\thresh$ leads to a symmetric positive definite covariance matrix with bounded non zero eigenvalues implies that the proposal distribution can be bounded by two zero-mean Gaussian distributions as follows:
\beq\label{eq:twogauss}
a n_{\sigma_1}(z - y) \leq \prop{\theta}(z,y)  \leq b n_{\sigma_2}(z - y) \quad \textrm{for all} \quad \theta \in \Theta\eqsp,
\eeq
where $\sigma_1$ and $\sigma_2$ are the corresponding standard deviation of the distributions and $a$ and $b$ are some scaling factors.

We denote by $\rho_\theta$ the ratio $\frac{\pi_\theta(z)  \prop{\theta}(z,y)}{\prop{\theta}(y,z) \pi_\theta(y)  }$ and given the assumptions H\ref{ass:bounded} and H\ref{ass:contlogpi}, define the quantity 
\beq\label{eq:delta}
\delta = \textrm{inf}(\rho_\theta(z,y), \theta \in \Theta, \quad z \in \mathcal{O} ) > 0 \eqsp.
\eeq
Likewise, the proposal distribution is bounded from below by some quantity noted $m$.
Then,
\beq
\Pi_\theta(z, \bset) \geq  \int_{\bset \cap \xset} \alpha_\theta(z, y) \prop{\theta}(z,y) \textrm{d}y \geq \textrm{min}(1, \delta) m \int_{\bset} \mathsf{1}_\xset(z)  \textrm{d}y\eqsp.
\eeq

Then, given the definition of~\eqref{eq:delta}, we can find a compact set $\mathcal{O}$ such that $\Pi_\theta(z, \bset) \geq  \epsilon$ where $\epsilon = \textrm{min}(1, \delta) m \textbf{Z}$ where $\textbf{Z}$ is the normalizing constant of the pdf $\frac{1}{\textbf{Z}}\mathbf{1}_\xset(z)  \textrm{d}y$.
The calculations above prove~\eqref{thm:main1}, \ie\ the existence of a small set for our family of transition kernels $(\Pi_\theta)_{\theta \in \Theta}$.

\newpage

\noindent \textbf{(ii) Drift condition and ergodicity: }
We first need to prove the fact that our family of transition kernels $(\Pi_\theta)_{\theta \in \Theta}$ satisfies a drift property.

For a given EBM model parameter $\theta \in \Theta$, we can see in~\citet{jarner2000geometric} that the drift condition boils down to proving that for the drift function noted $V_\theta$ and defined in~\eqref{eq:driftfunction}, we have
\beq\label{mainproof}
\sup \limits_{z \in \zset}  \frac{\Pi_\theta V_\theta(z)}{V_\theta(z)} < \infty \quad \textrm{and} \quad \lim \sup \limits_{|z| \to \infty}  \frac{\Pi_\theta V_\theta(z)}{V_\theta(z)} < 1 \eqsp.
\eeq

Throughout the proof, the model parameter is set to an arbitrary $\theta \in \Theta$.
Let denote the acceptation set, \ie\ $\rho_\theta \geq 1$ by $\accept(z) \eqdef \{ y \in \zset, \rho_\theta(z,y) \geq 1 \}$ for any state $y \in \bset$ and its complementary set $\compaccept(z)$.

\medskip
\noindent \textsc{Step (1): } Following our definition of the drift function in~\eqref{eq:driftfunction} we obtain:

\begin{align}\label{eq:main1}
 \frac{\Pi_\theta V_\theta(z)}{V_\theta(z)} & = \int_{\accept(z)}  \prop{\theta}(z,y) \frac{V_\theta(y)}{V_\theta(z)} \textrm{d}y +  \int_{\compaccept(z)} \frac{\pi_\theta(y)\prop{\theta}(y,z)}{\pi_\theta(z)\prop{\theta}(z,y)} \prop{\theta}(z,y) \frac{V_\theta(y)}{V_\theta(z)} \textrm{d}y \\ \nonumber &+  \int_{\compaccept(z)} (1 - \frac{\pi_\theta(y)\prop{\theta}(y,z)}{\pi_\theta(z)\prop{\theta}(z,y)}) \prop{\theta}(z,y)  \textrm{d}y\\
 &  \overset{(a)}{\leq} \int_{\accept(z)}  \prop{\theta}(z,y) \frac{\pi_\theta(y)^{-\beta}}{\pi_\theta(z)^{-\beta}} \textrm{d}y  + \int_{\compaccept(z)} \prop{\theta}(z,y) \frac{\pi_\theta(y)^{1-\beta}}{\pi_\theta(z)^{1-\beta}}\textrm{d}y +  \int_{\compaccept(z)} \prop{\theta}(z,y)  \textrm{d}y \eqsp,
\end{align}
where (a) is due to~\eqref{eq:driftfunction}.

Furthermore, according to~\eqref{eq:twogauss}, we thus have that, for any state $z$ in the acceptance set $\accept(z)$:
\beq \label{eq:comp}
\int_{\accept(z)}  \prop{\theta}(z,y) \frac{\pi_\theta(y)^{-\beta}}{\pi_\theta(z)^{-\beta}} \textrm{d}y  \leq  b \int_{\accept(z)}  n_{\sigma_2}(y-z)  \textrm{d}y \eqsp.
\eeq
For any state $z$ in the complementary set of the acceptance set, noted $\compaccept(z)$, we also have the following:
\beq
\int_{\compaccept(z)} \prop{\theta}(z,y) \frac{\pi_\theta(y)^{1-\beta}}{\pi_\theta(z)^{1-\beta}}\textrm{d}y \leq \int_{\compaccept(z)} \prop{\theta}(z,y)^{1- \beta} \prop{\theta}(y,z)^{\beta}  \textrm{d}y \leq b \int_{\compaccept(z)} n_{\sigma_2}(z - y)  \textrm{d}y \eqsp.
\eeq
While we can define the level set of the stationary distribution $\pi_\theta$ as $\mathcal{L}_{\pi_\theta(y)} = \{ z \in \zset, \pi_\theta(z) = \pi_\theta(y) \}$ for some state $y \in \bset$, a neighborhood of that level set is defined as $\mathcal{L}_{\pi_\theta(y)}(p) = \{z \in  \mathcal{L}_{\pi_\theta(y)}, z + t \frac{z}{|z|}, |t| \leq p \}$.
H\ref{ass:bounded} ensures the existence of a radial $r$ such that for all $z \in \zset, |z| \geq r$, then $0 \in \mathcal{L}_{\pi_\theta(y)}$ with $\pi_\theta(z) >  \pi_\theta(y)$.
Since the function $y \to n_{\sigma_2}(y - z)$ is smooth, it is known that there exists a constant $a >0$ such that for $\epsilon >0$, we have that 
\beq\label{eq:lowandup}
\int_{B(z,a)}  n_{\sigma_2}(y - z) \textrm{d}y \geq 1 - \epsilon \quad \textrm{and} \quad \int_{B(z,a) \cap \mathcal{L}_{\pi_\theta(y)}(p) }  n_{\sigma_2}(y - z) \textrm{d}y \leq  \epsilon \eqsp,
\eeq
for some $p$ small enough and where $B(z,a)$ denotes the ball around $z \in \zset$ of radius $a$.
Then combining~\eqref{eq:comp} and~\eqref{eq:lowandup} we have that:
\beq
\int_{\accept(z) \cap B(z,a) \cap \mathcal{L}_{\pi_\theta(y)}(p) }  \prop{\theta}(z,y) \frac{\pi_\theta(y)^{-\beta}}{\pi_\theta(z)^{-\beta}} \textrm{d}y  \leq  b \epsilon \eqsp.
\eeq
Conversely, we can define the following set $\mathcal{A} = \accept(z) \cap B(z,a) \cap \mathcal{L}^+$ where $u \in \mathcal{L}^+$ if $u \in \mathcal{L}_{\pi_\theta(y)}(p)$ and $\phi_\theta(u) > \pi_\theta(p)$.
Then using the second part of H\ref{ass:bounded}, there exists a radius $r' > r + a$, such that for $z \in \zset$ with $|z| \geq r'$ we have
\beq
\int_{\mathcal{A}} (\frac{\pi_\theta(y)}{\pi_\theta(z)})^{1-\beta} \prop{\theta}(y,z) \textrm{d}y \leq \mathsf{d}(p, r')^{1-\beta}  b \int_{\accept(z)}  n_{\sigma_2}(y-z)  \textrm{d}y\leq b \mathsf{d}(p, r')^{1-\beta} \eqsp,
\eeq
where $\mathsf{d}(p, r') = \sup \limits_{|z| > r'} \frac{\pi_\theta(z + p \frac{z}{|z|})}{\pi_\theta(z)}$. 
Note that H\ref{ass:bounded} implies that $\mathsf{d}(p, r') \to 0$ when $r' \to \infty$.
Likewise with  $\mathcal{A} = \accept(z) \cap B(z,a) \cap \mathcal{L}^-$ we have
\beq
\int_{\mathcal{A}} (\frac{\pi_\theta(y)}{\pi_\theta(z)})^{-\beta} \prop{\theta}(z,y) \textrm{d}y  \leq b \mathsf{d}(p, r')^{\beta} \eqsp.
\eeq
Same arguments can be obtained for the second term of~\eqref{eq:main1}, \ie\ $\prop{\theta}(z,y) \frac{\pi_\theta(y)^{1-\beta}}{\pi_\theta(z)^{1-\beta}}$ and we obtain, plugging the above in~\eqref{eq:main1} that:
\begin{align}
\lim \sup \limits_{|z| \to \infty}  \frac{\Pi_\theta V_\theta(z)}{V_\theta(z)} \leq \lim \sup \limits_{|z| \to \infty}  \int_{\compaccept(z)} \prop{\theta}(z,y)  \textrm{d}y \eqsp.
\end{align}
Since $\compaccept(z)$ is the complementary set of $\accept(z)$, the above inequality yields
\begin{align}
\lim \sup \limits_{|z| \to \infty}  \frac{\Pi_\theta V_\theta(z)}{V_\theta(z)} \leq 1 - \lim \inf \limits_{|z| \to \infty}  \int_{\accept(z)} \prop{\theta}(z,y)  \textrm{d}y \eqsp.
\end{align}

\medskip
\noindent \textsc{Step (2): } The final step of our proof consists in proving that $1 - \lim \inf \limits_{|z| \to \infty}  \int_{\accept(z)} \prop{\theta}(z,y)  \textrm{d}y \leq 1 - c$ where $c$ is a constant, independent of all the other quantities.

Given that the proposal distribution is a Gaussian and using assumption H\ref{ass:bounded} we have the existence of a constant $c_a$ depending on $a$ as defined above (the radius of the ball $B(z,a)$) such that
\beq\label{eq:gauss}
\frac{\pi_\theta(z)}{\pi_\theta(z- \ell \frac{z}{|z|})} \leq  c_a \leq \inf \limits_{y \in B(z,a)} \frac{\prop{\theta}(y, z)}{\prop{\theta}(z, y)} \quad \textrm{for any} \, z \in \zset, |z| \geq r^* \eqsp.
\eeq

Then for any $|z| \geq r^*$, we obtain that $z- \ell \frac{z}{|z|} \in \accept(z)$.
A particular subset of $\accept(z)$ used throughout the rest of the proof is the cone defined as 
\beq\label{eq:defcone}
\mathcal{P}(z) \eqdef \{ z- \ell \frac{z}{|z|} - \kappa \nu , \, \textrm{with} \quad i < a - \ell  , \nu \in \{ \nu \in \rset^d, \| \nu \| < 1\}, |\nu - \frac{z- \ell \frac{z}{|z|} }{|z- \ell \frac{z}{|z|} |} \leq \frac{\epsilon}{2}   \} \eqsp.
\eeq

Using Lemma~\ref{lem:cone}, we have that $\mathcal{P}(z) \subset \accept(z)$.
Then, we observe that  
\beq 
 \int_{\accept(z)} \prop{\theta}(z,y)  \textrm{d}y \overset{(a)}{\geq}  \int_{\accept(z)}a n_{\sigma_1}(y- z)  \textrm{d}y \overset{(b)}{\geq} a \int_{\mathcal{P}(z)}  n_{\sigma_1}(y-z)  \textrm{d}y \eqsp,
 \eeq
where we have used~\eqref{eq:twogauss} in (a) and applied Lemma~\ref{lem:cone} in (b).

If we define the translation of vector $z \in \zset$ by the operator $\mathcal{I} \subset \rset^d \to T_z(\mathcal{I})$, then
\beq\label{eq:constant}
 \int_{\accept(z)} \prop{\theta}(z,y)  \textrm{d}y \geq a \int_{\mathcal{P}(z)}  n_{\sigma_1}(y-z)  \textrm{d}y =  \int_{T_z(\mathcal{P}(z))}  n_{\sigma_1}(y-z)  \textrm{d}y \eqsp.
\eeq

Recalling the objective of \noindent \textsc{Step (2)} that is to find a constant $c$ such that $1 - \lim \inf \limits_{|z| \to \infty}  \int_{\accept(z)} \prop{\theta}(z,y)  \textrm{d}y \leq 1 - c$, we deduce from~\eqref{eq:constant} that since the set $\mathcal{P}(z)$ does not depend on the EBM model parameter $\theta$ and that once translated by $z$ the resulting set $T_z(\mathcal{P}(z))$ is independent of $z$ (but depends on $\ell$, see definition~\eqref{eq:defcone}, then the integral $ \int_{T_z(\mathcal{P}(z))}  n_{\sigma_1}(y-z)  \textrm{d}y$ in~\eqref{eq:constant} is independent of $z$ thus concluding on the existence of the constant $c$ such that 
$$\lim \sup \limits_{|z| \to \infty}  \frac{\Pi_\theta V_\theta(z)}{V_\theta(z)} \leq 1- c \eqsp.$$ 
Thus proving the second part of~\eqref{mainproof} which is the main drift condition we ought to demonstrate.
The first part of~\eqref{mainproof} can be proved by observing that $  \frac{\Pi_\theta V_\theta(z)}{V_\theta(z)} $ is smooth on $\zset$ according to H\ref{ass:contlogpi} and by construction of the transition kernel. Smoothness implies boundedness on the compact $\zset$.

\medskip
\noindent \textsc{Step (3): } 
We now use the main proven equations in~\eqref{mainproof} to derive the second result~\eqref{thm:main2} of Theorem~\ref{thm:thm1}.

We will begin by showing a similar inequality for the drift function $V_\theta$, thus not having uniformity, as an intermediary step.
The Drift property is a consequence of \textsc{Step (2)} and~\eqref{eq:constant} shown above.
Thus, there exists $0 < \bar{\mu} < 1$, $\bar{\delta} > 0$ such that for all $z \zset$:
\beq\label{eq:driftvtheta}
\Pi_\theta V_\theta(z) \leq \bar{\mu} V_\theta(z) + \bar{\delta} \mathsf{1}_{\mathcal{O}}(z) \eqsp,
\eeq
where $V_\theta$ is defined by~\eqref{eq:driftfunction}.
Using the two functions defined in~\eqref{eq:vfunctions}, we define for $z \in \zset$, the $V$ function independent of $\theta$ as follows:
\beq\label{eq:defv}
V(z) = V_1(z)^\alpha V_2(z)^{2\alpha} \eqsp,
\eeq
where $0 < \alpha < \textrm{min}(\frac{1}{2\beta},\frac{a_0}{3})$, $a_0$ is defined in H\ref{ass:V2} and $\beta$ is defined in~\eqref{eq:driftfunction}.
Thus for $\theta \in \Theta$, $z \in \zset$ and $\epsilon >0$:
\begin{align}\notag
\Pi_\theta V(z) & = \int_{\zset} \Pi_\theta(z,y) V_1(y)^\alpha V_2(y)^{2\alpha} \textrm{d}y\\ \notag \eqsp,
& \overset{(a)}{\leq} \frac{1}{2} \int_{\zset} \Pi_\theta(z,y) (\frac{1}{\epsilon^2}V_1(y)^{2\alpha} + \epsilon^2 V_2(y)^{4\alpha}) \textrm{d}y \eqsp,\\ 
& \overset{(b)}{\leq} \frac{1}{2\epsilon^2} \int_{\zset} \Pi_\theta(z,y) V_\theta(y)^{2\alpha} + \frac{\epsilon^2}{2}  \int_{\zset} \Pi_\theta(z,y)  V_2(y)^{4\alpha} \textrm{d}y \label{eq:uniform1} \eqsp,
\end{align}
where we have used the Young's inequality in (a) and the definition of $V_1$, see~\eqref{eq:vfunctions}, in (b).
Then plugging~\eqref{eq:driftvtheta} in~\eqref{eq:uniform1}, we have
\begin{align}
\Pi_\theta V(z) & \leq \frac{1}{2\epsilon^2} (\bar{\mu} V_\theta(z)^{2\alpha} + \bar{\delta} \mathsf{1}_{\mathcal{O}}(z) ) + \frac{\epsilon^2}{2}  \int_{\zset} \Pi_\theta(z,y)  V_2(y)^{4\alpha} \textrm{d}y \eqsp,\\
& \leq \frac{\bar{\mu}}{2 \epsilon^2} V(z) +\frac{\bar{\delta}}{2 \epsilon^2} \mathsf{1}_{\mathcal{O}}(z)  + \frac{\epsilon^2}{2}  \int_{\zset} \Pi_\theta(z,y)  V_2(y)^{4\alpha} \textrm{d}y \eqsp,\\
& \leq \frac{\bar{\mu}}{2 \epsilon^2} V(z) +\frac{\bar{\delta}}{2 \epsilon^2} \mathsf{1}_{\mathcal{O}}(z)  + \frac{\epsilon^2}{2} \sup \limits_{\theta \in \Theta, z \in \zset} \int_{\zset} \Pi_\theta(z,y)  V_2(y)^{4\alpha} \textrm{d}y \eqsp,\\
& \leq \frac{\bar{\mu}}{2 \epsilon^2} V(z) +\frac{\bar{\delta}}{2 \epsilon^2} \mathsf{1}_{\mathcal{O}}(z)  + \frac{\epsilon^2}{1 + \bar{\mu}}V(z) \eqsp,\\
& \leq \left(\frac{\bar{\mu}}{2 \epsilon^2} + \frac{\epsilon^2}{1 + \bar{\mu}} \right) V(z) +\frac{\bar{\delta}}{2 \epsilon^2} \mathsf{1}_{\mathcal{O}}(z) \eqsp,
\end{align}

where we have used~\eqref{eq:defv} and the assumption H\ref{ass:V2} in the last inequality, ensuring the existence of such exponent $\alpha$.

Setting $\epsilon \eqdef \sqrt{\frac{\bar{\mu}(1+\bar{\mu})}{2}}$, $ \mu  \eqdef  \sqrt{\frac{2\bar{\mu}}{1+\bar{\mu}}}$ and $\delta \eqdef \frac{\bar{\delta}}{2 \epsilon^2}$ proves the uniform ergodicity in~\eqref{thm:main2} and concludes the proof of Theorem~\ref{thm:thm1}.
\end{proof}

\newpage
\subsection{Proof of Lemma~\ref{lem:cone}}

\begin{Lemma*}
Define $\mathcal{P}(z) \eqdef \{ z- \ell \frac{z}{|z|} - \kappa \nu , \, \textrm{with} \quad \kappa < a - \ell  , \nu \in \{ \nu \in \rset^d, \| \nu \| < 1\}, |\nu - \frac{z- \ell \frac{z}{|z|} }{|z- \ell \frac{z}{|z|} |} \leq \frac{\epsilon}{2}   \}$ and $\accept(z) \eqdef \{ y \in \zset, \rho_\theta(z,y) \geq 1 \}$. Then for $z \in \zset$, $\mathcal{P}(z) \subset \accept(z)$.
\end{Lemma*}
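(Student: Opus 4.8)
The plan is to fix an arbitrary $y\in\mathcal{P}(z)$, written as $y=z-\ell\frac{z}{|z|}-\kappa\nu$ with $\kappa<a-\ell$, $\|\nu\|<1$ and $|\nu-\frac{z-\ell z/|z|}{|z-\ell z/|z|}|\le\epsilon/2$, and to verify directly the defining inequality of the acceptance region, namely $\rho_\theta(z,y)\ge1$, \ie\ $\pi_\theta(z)\,\prop{\theta}(z,y)\ge\pi_\theta(y)\,\prop{\theta}(y,z)$, for every $\theta\in\Theta$ simultaneously. Conceptually, $\mathcal{P}(z)$ is a narrow cone of ``inward'' points around the ray point $z-\ell\frac{z}{|z|}$, and the lemma upgrades the ray-point membership already obtained inside the proof of Theorem~\ref{thm:thm1} to membership of the entire cone.

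The first, cheap, ingredient is the containment $\mathcal{P}(z)\subset B(z,a)$: since $\|\nu\|<1$ and $\kappa<a-\ell$,
\[
|y-z|=\Big|\,\ell\tfrac{z}{|z|}+\kappa\nu\,\Big|\le\ell+\kappa\|\nu\|<\ell+(a-\ell)=a ,
\]
so at the point $y$ we may invoke the uniform Gaussian sandwich \eqref{eq:twogauss} and, more usefully, inequality \eqref{eq:gauss}, which for $|z|\ge r^*$ both controls the proposal ratio $\prop{\theta}(y,z)/\prop{\theta}(z,y)$ on $B(z,a)$ by the $\theta$-independent constant $c_a$ and records the on-ray density estimate $\pi_\theta(z)/\pi_\theta(z-\ell\tfrac{z}{|z|})\le c_a$ --- which is exactly the statement $z-\ell\frac{z}{|z|}\in\accept(z)$.

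The substantive step is to carry that density estimate over from the ray point to $y$ using Assumption~H\ref{ass:bounded}. The point $y$ differs from $z-\ell\frac{z}{|z|}$ by the displacement $-\kappa\nu$ of length $\kappa<a-\ell$ in a direction $\nu$ within angle $\epsilon/2$ of the radial direction $\frac{z}{|z|}$ (note $\frac{z-\ell z/|z|}{|z-\ell z/|z|}=\frac{z}{|z|}$ once $|z|>\ell$). The radial part of H\ref{ass:bounded}, combined with the joint continuity in H\ref{ass:contlogpi}, is what forces $\pi_\theta$ to decrease along rays outside a fixed radius uniformly in $\theta$ and underlies \eqref{eq:gauss}; the angular part --- the uniform strict negativity of $\frac{z}{|z|}\cdot\frac{\nabla f_\theta(z)}{|\nabla f_\theta(z)|}$ --- lets one widen this from the single ray to the cone, since for $|z|$ past a ($\theta$-free) radius and $\epsilon$ small the level set of $\pi_\theta$ through $z-\ell\frac{z}{|z|}$ is transversal to the radial direction with slope bounded away from $0$, so a further move of length $\le a-\ell$ in any direction within angle $\epsilon/2$ of radial cannot cross onto the lower-density side. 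Feeding the resulting density bound into the proposal-ratio bound from \eqref{eq:gauss} yields $\rho_\theta(z,y)\ge1$, hence $y\in\accept(z)$; and since $a$, $c_a$, $r^*$ and $\epsilon$ were all chosen uniformly in $\theta$, this holds for all $\theta\in\Theta$, which is $\mathcal{P}(z)\subset\accept(z)$.

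The main obstacle is this last widening: obtaining, with constants that do not depend on $\theta$, a lower bound on how far one may perturb $z-\ell\frac{z}{|z|}$ --- both the $\epsilon/2$ of angular slack and the $a-\ell$ of radial slack --- while keeping $\pi_\theta$ on the favourable side of its level set. This is precisely where the angular half of H\ref{ass:bounded} together with H\ref{ass:contlogpi} has to be used to extract a uniform lower bound on the cosine of the angle between $-\nabla f_\theta$ and $\frac{z}{|z|}$ at large radius; everything after that reduces to the elementary Gaussian estimates already packaged into \eqref{eq:twogauss}--\eqref{eq:gauss}.
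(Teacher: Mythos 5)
Your proposal is correct and follows essentially the same route as the paper: containment of the cone in $B(z,a)$, the proposal-ratio and on-ray density bounds of \eqref{eq:gauss}, and the angular part of H\ref{ass:bounded} to propagate the density estimate from the ray point $z-\ell\frac{z}{|z|}$ to the whole cone. The step you flag as the main obstacle --- showing that the perturbation by $-\kappa\nu$ cannot cross to the lower-density side of the level set --- is precisely what the paper executes via a mean-value-theorem expansion of $\pi_\theta$ along the segment, which yields $\frac{\nabla f_\theta(y)}{|\nabla f_\theta(y)|}\cdot\nu\le 0$ and hence $\pi_\theta(y)\ge\pi_\theta(z-\ell\frac{z}{|z|})\ge\frac{1}{c_a}\pi_\theta(z)$.
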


\begin{proof}

In order to show the inclusion of the set $\mathcal{P}(z)$ in $\accept(z)$ we start by selecting the quantity $y = z- \ell \frac{z}{|z|} - \kappa \nu$ for $z\in \zset$ and $\kappa < a - \ell $ where $a$ is the radius of the ball used in~\eqref{eq:lowandup} such that $y \in \mathcal{P}(z)$.
We will now show that $y \in \accept(z)$.

By the generalization of Rolle's theorem applied on the stationary distribution $\pi_\theta$, we guarantee the existence of some $\kappa^*$ such that:
\begin{align}
\nabla \pi_\theta( z- \ell \frac{z}{|z|} - \kappa^* \nu)  = \frac{\pi_\theta(y) - \pi_\theta(z- \ell \frac{z}{|z|})}{y - (z- \ell \frac{z}{|z|})} = - \frac{\pi_\theta(y) - \pi_\theta(z- \ell \frac{z}{|z|})}{\kappa \nu} \eqsp.
\end{align}

Expanding $\nabla \pi_\theta( z- \ell \frac{z}{|z|} - \kappa^* \nu)$ yields:
\beq\label{eq:interlem}
\pi_\theta(y) - \pi_\theta(z- \ell \frac{z}{|z|}) = - \kappa \nu \frac{z- \ell \frac{z}{|z|} - \kappa^* \nu}{|z- \ell \frac{z}{|z|} - \kappa^* \nu|} |\nabla \pi_\theta( z- \ell \frac{z}{|z|} - \kappa^* \nu)| \eqsp.
\eeq

Yet, under assumption H\ref{ass:bounded}, there exists $\epsilon$ such that 
$$
 \frac{\nabla f_{\theta}(z) }{|\nabla f_{\theta}(z) |}  \frac{z }{|z|} \leq -\epsilon \eqsp,
 $$
 and for any $y \in \mathcal{P}(z)$ we note that $\frac{y }{|y|} - \frac{z }{|z|}|\leq \frac{\epsilon}{2}$, by construction of the set.
 Thus, 
\beq\label{eq:finallem}
  \frac{\nabla f_{\theta}(y) }{|\nabla f_{\theta}(y) |}  \nu  =  \frac{\nabla f_{\theta}(y) }{|\nabla f_{\theta}(y) |} (\nu - \frac{z- \ell \frac{z}{|z|} }{|z- \ell \frac{z}{|z|} |}) +  \frac{\nabla f_{\theta}(y) }{|\nabla f_{\theta}(y) |} (\nu - \frac{z- \ell \frac{z}{|z|} }{|z- \ell \frac{z}{|z|} |} - \frac{y }{|y|} ) + \frac{\nabla f_{\theta}(y) }{|\nabla f_{\theta}(y) |}  \frac{y }{|y |} \leq 0 \eqsp,
\eeq
 where $\nu$ is used in the definition of $\mathcal{P}(z)$.
Additionally we let $  \frac{\nabla f_{\theta}(y) }{|\nabla f_{\theta}(y) |}  \nu  $ denote the vector multiplication between the normalized gradient and $\nu$.
Then plugging~\eqref{eq:finallem} into~\eqref{eq:interlem} leads to $\pi_\theta(y) - \pi_\theta(z- \ell \frac{z}{|z|}) \geq 0$ and $y \in \mathcal{P}(z)$ implies, using~\eqref{eq:gauss}, that $\pi_\theta(y) \geq \pi_\theta(z- \ell \frac{z}{|z|}) \geq \frac{1}{c_a} \pi_\theta(z)$. 
 Finally $y \in \mathcal{P}(z)$ implies that $y \in \accept(z)$, concluding the proof of Lemma~\ref{lem:cone}.
 
\end{proof}

\end{document}